\documentclass[runningheads]{llncs}
\bibliographystyle{plainnat}
\setlength{\parskip}{0pt}
\setlength{\parindent}{15pt}

\usepackage[utf8]{inputenc} % allow utf-8 input
\usepackage[T1]{fontenc}
\usepackage{fullpage}
\usepackage{amsmath}
\usepackage{enumerate}
\usepackage{mathtools}
\usepackage{nicefrac}
\usepackage{appendix}
\usepackage{booktabs}
\usepackage{adjustbox}
\usepackage{hyperref}       % hyperlinks
\usepackage{amsfonts}       % blackboard math symbols
\usepackage{microtype}      % microtypography
\usepackage{thm-restate}
\usepackage{cleveref}
\usepackage[inline]{enumitem}
\usepackage{tikz}
\usepackage{natbib}
\usepackage{siunitx}
\usepackage{booktabs}
\usepackage{float} 
\usepackage{wrapfig}

\usetikzlibrary{shapes}
\usetikzlibrary{positioning}

\usepackage{subcaption}
\usepackage[font=small,labelfont=bf]{caption}

\begin{document}

\title{VeriFlow: Modeling Distributions for Neural Network Verification}

\author{Faried Abu Zaid \inst{1}  \and Daniel Neider  \inst{2}\inst{3} \and Mustafa Yal\c{c}{\i}ner \inst{2}\inst{3}}
\institute{
    Independent Researcher, Munich, Germany \and
    TU Dortmund University, Dortmund, Germany \and
    Center for Trustworthy Data Science and Security, University Alliance Ruhr, Dortmund, Germany 
}

\maketitle

\begin{abstract}
Formal verification has emerged as a promising method to ensure the safety and reliability of neural networks.
However, many relevant properties, such as fairness or global robustness, pertain to the entire input space. If one applies verification techniques naively, the neural network is checked even on inputs that do not occur in the real world and have no meaning.
To tackle this shortcoming, we propose the VeriFlow architecture as a flow-based density model tailored to allow any verification approach to restrict its search to some data distribution of interest.
We argue that our architecture is particularly well suited for this purpose because of two major properties. 
First, we show that the transformation that is defined by our model is piecewise affine. Therefore, the model allows the usage of verifiers based on constraint solving with linear arithmetic.
Second, upper density level sets (UDL) of the data distribution are definable via linear constraints in the latent space. As a consequence, representations of UDLs specified by a given probability are effectively computable in the latent space. This property allows for effective verification with a fine-grained, probabilistically interpretable control of how \mbox{(a-)typical} the inputs subject to verification are. 
\end{abstract}

\section{Introduction}
\newcommand{\target}{\tau}
\newcommand{\Lk}{\ell_k}

The outstanding performance of neural networks in tasks such as object detection \cite{zhao2019object}, image classification \cite{rawat2017deep}, anomaly detection \cite{pang2021deep} and natural language processing \cite{goldberg2016primer} made them a popular solution for many real-world-applications, including safety-critical ones.
With the increasing popularity of neural networks, defects and limitations of these systems have been witnessed by the general public. The AI incident database\footnote{\url{https://incidentdatabase.ai/}} keeps track of harms and near-harms caused by AI-Systems in the real world.

Ideally, safety and fairness properties of such inherently opaque neural networks should be formally guaranteed when used in safety-critical applications.
As a solution, formal verification can be used to check whether a neural network satisfies a given safety property for the entire input space, or whether there exists some (synthetic) input for which the desired property is violated.
This is in contrast to the statistical testing methods classically employed in machine learning, where the output of the neural network is checked for a finite set of samples, usually from a held-out test set.

However, state-of-the-art formal verification methods are designed for verifying either global or local properties. 
Global properties ensure a specific behavior of the neural network on the whole input space. 
As an example, fairness properties require the neural network to predict the same output for any two inputs that only differ in some sensitive attribute.
Local properties, on the other hand, ensure a specific behavior of the neural network only in some part of the input space that is usually restricted using the training set. 
One well-studied example for a local property is \textit{adversarial robustness}, which requires from the neural network for any point from the data set that any minor perturbation of that point does not significantly change the prediction.
However, both global and local properties have shortcomings limiting their applicability.
Local properties suffer from the same problem as statistical testing, i.e., they rely on a high-quality data set that the verification property is based on.
Global properties refer to the entire input space, including regions that we may not need to verify, such as noise-inputs or regions of the input space for which there are only very few training samples available (epistemic uncertainty). 
See Figure~\ref{fig:motivation} for two Marabou-generated counterexamples for a global property on MNIST: one resembles a digit, the other resembles noise --- highlighting the importance of modeling data distributions for verification.

\begin{wrapfigure}{r}{0.5\linewidth}
    \centering

    \begin{subfigure}[b]{0.3\linewidth}
        \includegraphics[width=\linewidth]{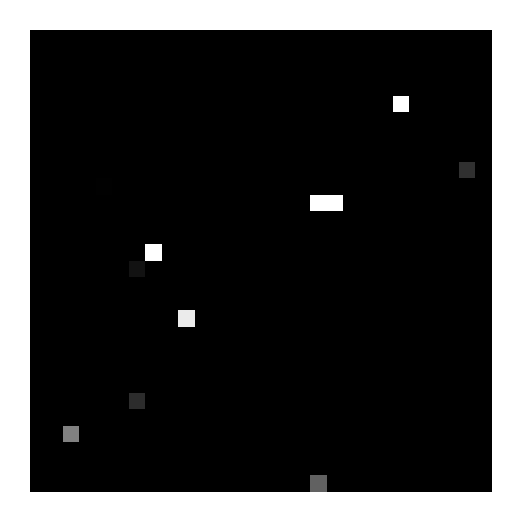}
        \caption*{Without VeriFlow}
        \label{fig:motivation:without-veriflow}
    \end{subfigure}
    \begin{subfigure}[b]{0.3\linewidth}
        \includegraphics[width=\linewidth]{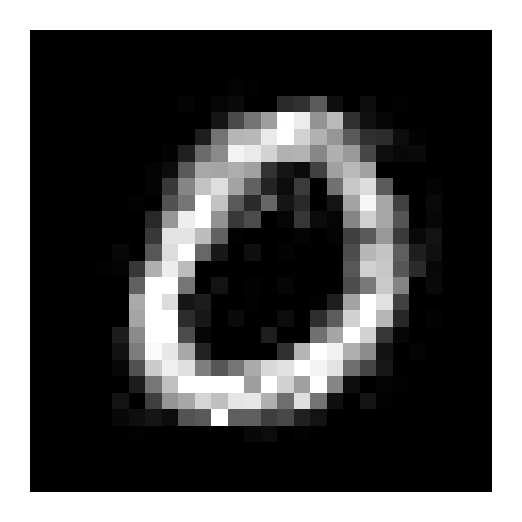}
        \caption*{With VeriFlow}
        \label{fig:motivation:with-veriflow}
    \end{subfigure}
    \caption{Verification of a global property}
    \label{fig:motivation}
\end{wrapfigure}

Specifically, we verify the \textit{global} property: All inputs that are classified as $0$ have low confidence.
We deliberately chose this synthetic property to test the intuition that confident classifications should correspond to typical, recognizable inputs. 
This experiment challenges that assumption.
We verify the property with Marabou twice: once with VeriFlow, and once without.
As a result, we obtain two input assignments as counterexamples that are classified as digit 0 but with high confidence by the ReLU-based neural network.
Confidence is computed as ${\text{conf}(y) := y_i - \left(\sum_{j \neq i} y_j\right)/|y|}$, where $y$ is the logit vector, $i=0$ is the target class, and $|y|=10$ denotes the number of logits.
The input on the left was found without constraining the input space and resembles noise --- yet it is still classified as 0 with high confidence.
The input on the right was found by applying VeriFlow with distributional constraints, yielding a digit that visually resembles a typical 0. 
This illustrates two key insights: First, even for properties that intuitively suggest in-distribution counterexamples, the verifier may return highly atypical inputs. Second, VeriFlow enables the discovery of meaningful, in-distribution counterexamples for neural classifiers.

In order to restrict global properties to some data distribution of interest, generative adversarial networks (GANs) and variational autoencoders (VAEs) can be used to verify the global property only in the high-density region of the data distribution.
However, while these models are indeed able to capture the data distribution, they do not allow for probabilistic interpretability. 
For example, variational autoencoders are generally not designed to ensure that the low-density (high-density) data points from the training data set are mapped to the tail (center) of the latent space distribution. 

To overcome this issue, we design a flow model tailored towards the application in neural network verification.
We leverage our flow model to restrict the input space of the neural network under verification to the underlying data distribution.
This allows restricting the verification of global properties to the data distribution of interest.
In contrast to GANs and VAEs, our VeriFlow architecture does not only allow for efficient sampling but also provides probabilistic interpretability via tractable likelihoods \cite{papamakarios_normalizing_2021,goodfellow2014generative,rezende2015variational,dinh_nice_2015,tabak2010density}.
This feature is unique to our flow architecture and important to facilitate fine-grained probabilistic control when restricting the input space to typical inputs during verification. 
This approach makes the verification property less reliant on the dataset, while still keeping the input space focused on meaningful data.

\paragraph{Summary}
The goal of VeriFlow is to enable restricting the verification of a neural network to a probabilistically meaningful subset of the input space.
To this end, we propose a \textbf{bijective, invertible, and piecewise linear} flow model that \textbf{preserves monotonic relationship between latent and learned densities} by mapping high-density (and low-density) regions in the data distribution to high-density (and low-density) regions in the base distribution.
Crucially, \textbf{high-density regions are defined in terms of $\Lk$-norms in the base distribution}, allowing us to express the high-density data region as a set of linear constraints. 
These constraints, in turn, can be incorporated into the specification of downstream verification tasks.
In the following two sections, we formally introduce all bolded properties and sketch the VeriFlow architecture.
\textbf{In our experiments}, we present a novel notion of global robustness, which requires the neural network to be robust across the entire high-density input region. 
Crucially, the high-density region is precisely characterized by our flow model. 
While our global robustness property is desirable on its own, we demonstrate how our specification can accelerate the verification of local robustness and discuss how the two notions are semantically connected.
Finally, we evaluate VeriFlow's learning ability and show that it outperforms existing uniformly scaling flow architectures, often by a large margin.

\iffalse
We briefly list our main theoretical results. We design a novel flow model with $L_1$-radially monotonic base distributions called VeriFlow that outperforms its normalizing counterparts in nearly all of our benchmarks and provides the following theoretical properties crucial for the verification domain:
\begin{enumerate*}[label=(\roman*)]
    \item \label{contribution_log_level} VeriFlow Maps upper/lower (log-)density level sets of the target distribution to upper/lower \mbox{(log-)density} level sets of the base distribution allowing for fine-grained probabilistic control during sampling, and
    \item \label{laten_udl} VeriFlow Allows the definition of the pre-image of a density level set in the latent space via linear constraints.
\end{enumerate*}

We identify sufficient conditions for a flow layer to have the aforementioned properties, survey the literature, and present a collection of layers --- with their most representative members being additive coupling layers and bijective affine layers --- that can be arbitrarily combined to yield a flow with the desired properties.
\fi

\section{Preliminaries}

\textbf{Verifying neural networks} involves checking if a network \( f \) satisfies a semantic property \( P \), often expressed as \( \phi(x) \implies \psi(f(x)) \), where \(\phi\) and \(\psi\) are pre- and postconditions. Two conceptually different verification approaches are relevant for this paper: \textit{constraint-based verification} and \textit{abstract interpretation}. We briefly explain the core idea of each verification approach and refer to~\citet{albarghouthi-book} for an in-depth explanation.
In \textbf{constraint-based verification}, the network \( f \) and  property \( P \) are translated into a logical formula \(\Psi_{f,P}\), whose validity implies that \( f \) satisfies \( P \). 
Verifying \(\Psi_{f,P}\) involves checking the unsatisfiability of \(\neg\Psi_{f,P}\). If \(\neg\Psi_{f,P}\) is satisfiable, a counterexample exists witnessing a violation of $p$ by $f$.
Efficient SMT solvers like Marabou 2~\cite{wu2024marabou} handle these formulas for networks with piecewise linear components such as ReLU activations. Constraint-based verification methods are \textit{complete} and always provide counterexamples when the property is violated, though their runtime can be high for positive proofs.
\textbf{Abstract Interpretation} symbolically executes a neural network using geometric abstract domains, such as zonotopes or star sets, which over-approximate input sets. By propagating these through the network, the procedure over-approximates possible outputs. Verification succeeds if the output lies entirely within the ``safe space'' defined by the semantic property \(\psi\). If outputs are outside, the property fails; partial overlap leaves the result inconclusive due to the over-approximation in the process.

\textbf{Flow-based models} are a class of neural networks designed to learn complex data distributions, supporting both sample generation and density estimation. 
Specifically, these models learn a bijective, continuously differentiable transformation — known as a \emph{diffeomorphism} — that maps a simple base distribution \( B \) with probability density function $p_B$ to a complex data distribution \( D \) with probability density function $p_D$.
In our setting, we define the flow in the direction from the base distribution to the data distribution, that is, from latent to observed variables. 
Let \( F \) denote the learned transformation, implemented as an invertible neural network. The invertibility of \( F \) allows us to leverage the model for two complementary tasks:

\begin{enumerate}
  \item \textbf{Sampling:} Draw a sample \( z \sim B \), then apply the forward map to obtain \( x = F(z) \). The result, $x$, is a new synthetic sample from the learned data distribution.
 
  \item \textbf{Density estimation:} Given a data point \( x \), compute its likelihood under the model using the change-of-variables formula~\cite{folland_real_1999}: ${p_D(x) = \left| \det \frac{\partial F^{-1}}{\partial x^T} \right| \cdot p_B(F^{-1}(x)).}$ 
  \end{enumerate}
If the determinant of the $\det \frac{\partial F^{-1}}{\partial x^T}$ is constant, then we say that the flow is \textbf{uniformly scaling}.  
\subsection{Base Distributions}
The most commonly used base distribution is the \emph{isotropic Gaussian}, a multivariate normal distribution with zero mean and identity covariance matrix. 
When a flow model uses this distribution as its base, it is referred to as a \emph{normalizing flow}.
While the Gaussian distribution is a natural choice, it is by no means the only option. 
In fact, the Knothe-Rosenblatt Rearrangement Theorem~\cite{knothe_contributions_1957,rosenblatt_remarks_1952} guarantees that for any two absolutely continuous probability distributions, there exists a smooth, invertible mapping (a \emph{diffeomorphism}) that transforms one distribution into the other. 
This foundational result justifies using alternative base distributions: the expressive power of flow models is theoretically unaffected by the choice, as long as the flow is flexible enough.

Alternative base distributions relevant to us are those that are \emph{$k$-radial monotonic}, where the probability density depends solely on the $\Lk$-norm of the input:

\begin{definition}[Radial Distributions and Radial Profile]
    Let $k \in \mathbb{N}_{>0} \cup \{\infty\}$, and let $X$ be a random variable in $\mathbb{R}^d$ with density function $p(x)$. We say that $X$ is \emph{$k$-radially distributed} if there exists a function $g : \mathbb{R}_+ \to \mathbb{R}_+$ such that $p(x) = g(|x|_k)$,
    where $|x|_k$ denotes the $\Lk$-norm of $x$. The function $g$ is called the \textbf{radial profile} of $X$. If $g$ is strictly decreasing, then $X$ is called \emph{$k$-radial monotonic}.
\end{definition}

Intuitively, a $k$-radial distribution is one where all points at the same $\Lk$-distance from the origin have the same density. If the density decreases as the norm increases, the distribution is \emph{radially monotonic}. This includes well-known distributions like the Gaussian (for $k=2$) and the Laplace distribution (for $k=1$).
Importantly, $k$-radial distributions are completely determined by  the distribution of the $\Lk$-norm. Here, the norm distribution w.r.t. a random variable $X$ refers to the distribution of $|X|_k$.
The following definition makes this explicit by showing how to construct a $k$-radial distribution from a one-dimensional norm distribution:

\begin{definition}[$k$-Radial Distribution Construction]
    Let $\rho: \mathbb{R}_+ \to \mathbb{R}_+$ be a one-dimensional probability density function, and let $k \in \mathbb{N}_{>0} \cup \{\infty\}$. The profile of the \emph{$k$-radial distribution} in $d$-dimensional space with norm distribution $\rho$ is given by 
    $g(r) = \rho(r) \left(\frac{\partial V^d_k(r)}{\partial r}\right)^{-1}$,
    where $V^d_k(r)$ denotes the volume of the $\Lk$-ball of radius $r$ in $\mathbb{R}^d$.
\end{definition}

Here, the derivative $\frac{\partial V^d_k(r)}{\partial r}$ captures the rate at which the volume increases as we move outward in norm, i.e. $g(r)$ describes the density of the corresponding radial distribution at points x with $|x|_k=r$. The formula ensures that when we integrate the density over spherical shells, i.e., sets of points with the same norm, we recover the desired norm distribution $\rho$.

Although the above construction enables precise density estimation for every point via the norm distribution, density estimators are often used more coarsely --- such as in our work --- to determine whether a point lies in a high- or low-density region.
To formalize this, we define upper and lower density level sets, which represent regions containing high-density samples (center of the distribution) and low-density samples (tail of the distribution), respectively.

\begin{definition}
Given the density of the input distribution $p_D:\mathbb{R}^d \rightarrow \mathbb{R}_+$, the set of points whose density exceeds a given threshold $t$ is called the upper density level set (UDL) and is defined as ${L^{\uparrow}_D(t) \coloneqq \{ x \in \mathbb{R}^d \mid p_D (x) \geq t \}}$.
Conversely, the lower density level set (LDL) contains the set of points subceeding the density threshold: ${{L^{\downarrow}_D (t)\coloneqq\{ x \in \mathbb{R}^d \mid p_D (x) \leq t \}}= \mathbb{R}^d\setminus L^{\uparrow}_D}$. 
If for $q\in[0,1]$ there is a unique UDL of $D$ with probability $q$, then we denote this set by $\text{UDL}_D(q)$.
\end{definition}
Note that the existence and uniqueness of the UDL with a given probability is guaranteed if for all $t>0$ the equality ${P_D(\{x\mid p(x)_D =t\}) = 0}$ holds, where $P_D$ denotes the probability induced by $p_D$, defined as ${P(x\in S) := \int_S p_D(x)dx}$.

Crucial for our application is the fact that density level sets of $k$-radial distributions can be described in terms of quantiles of the radial profile. Clearly, each density-level set is the union of $k$-spherical shells, i.e. $\text{UDL}_X(q) = \{ x \mid |x|_k \in R\}$ for some $R\subseteq \mathbb{R}^+$. Therefore, in order to describe $\text{UDL}_X(q)$, it is sufficient to describe its norm projection $R=\{|x|_k \mid x\in \text{UDL}_X(q)\}$, which has a very simple box shape for radial monotonic distributions:

\begin{restatable}{observation}{rad_udls}\label{prop:rad_udls}
    Let $X$ be a $k$-radially distributed random variable with radial profile $g$. Then 
    $\{|x|_k \mid x\in \text{UDL}_X(q)\} = \{r \mid g(r) > t\},$
    where $t=\textrm{quantile}_{g(|X|_k)}(1-q)$. Moreover, if $X$ is radial monotonic, then 
    $\{|x|_k \mid x\in \text{UDL}_X(q)\} = [0, \text{quantile}_{|X|_k}(q))$.
\end{restatable}

\subsection{Piecewise Affine Flows} 
A function $f: \mathcal{X}\to \mathcal{Y}$ is piecewise affine, if there exists a partition of the domain $\mathcal{X} = \mathcal{X}_1 \cup \cdots \cup \mathcal{X}_n$ such that $f$ restricted to $\mathcal{X}_i$ is affine for every $i$. 
We call $\mathcal{X}_1,\ldots, \mathcal{X}_n$ affine regions of $f$.
As we argued earlier, piecewise affinity is crucial for efficient SMT-based verification. Therefore, our flow transformation should be piecewise affine, which we can achieve e.g., by using ReLU networks. If we can ensure that the defined function is bijective, then we obtain a continuous piecewise affine bijection where
the affine regions can be represented as intersections of open and closed half-spaces \cite{moser_tessellationfiltering_2022}.
Hence, the regions are contained in the Borel algebra $\mathcal{B}(\mathbb{R}^d)$. 

Proposition~\ref{prop:pa-flow} asserts that the change of variables formula remains valid for piecewise affine bijections (instead of diffeomorphisms), provided that the affine regions are Borel sets.

 \begin{restatable}{prop}{paflow}\label{prop:pa-flow}
     Let $F:\mathbb{R}^d\to\mathbb{R}^d$ be a piecewise affine bijection, with affine regions $\mathcal{X}_1,\ldots,\mathcal{X}_n\in\mathcal{B}(\mathbb{R})^d$ and
     let $B$ be an absolutely continuous random variable. Then, 
     $p_{F(B)}(x) = p_{B}(F^{-1}(x))\left|\det\frac{\partial F^{-1}}{\partial x}\right|$,
     where the Jacobian of $F^{-1}$ is evaluated piecewise, according to the affine regions.
 \end{restatable}

\begin{proof}
For $i\in\{1\ldots,n\}$, let $f_i$ be the affine function that coincides with $F$ on $\mathcal{X}_i$.  We consider the conditional probability densities of type
\[p_{B}(x\mid x\in \mathcal{X}_i) = \frac{p_B(x) \cdot \mathbb{I}[x\in \mathcal{X}_i]}{P(B\in \mathcal{X}_i)}.\]
 Since $F$ is an affine bijection, i.e. a diffeomorphism on the  
the support of $p_B(x| x\in \mathcal{X}_i)$, $\mathcal{X}_i$, we can employ the change of variables formula and obtain that 
\begin{align*}
&p_{F(B)}(y\mid F^{-1}(y) \in\mathcal{X})_i = \frac{p_B(f_i^{-1}(y)) \left|\det\frac{\partial f_i^{-1}}{\partial y}\right| \cdot \mathbb{I}[F^{-1}(y)\in \mathcal{X}_i]}{P(B\in \mathcal{X}_i)}.
\end{align*} 
Since $\mathcal{X}_1,\ldots, \mathcal{X}_n$ is given by $P_B(\mathcal{X}_i)\cdot p_{F(B)}(x \mid x\in \mathcal{X}_i)$, we can compute $p_{F(B)}$ by the sum rule to obtain: 
\begin{align*}
    p_{F(B)}(x) &= \sum_{i=1}^n P(x\in\mathcal{X}_i) p_{F(B)}(x\mid x\in\mathcal{X}_i) \\
    &=\sum_{i=1}^n p_B(f_i^{-1}(x)) \left|\det\frac{\partial f_i^{-1}}{\partial x}\right| \cdot \mathbb{I}[F^{-1}(x)\in \mathcal{X}_i] \\
                &\stackrel{\ast}{=} p_B(F^{-1}(x)) \sum_{i=1}^n \left|\det\frac{\partial f_i^{-1}}{\partial x}\right| \cdot \mathbb{I}[F^{-1}(x)\in \mathcal{X}_i] \\
                &= p_{B}(F^{-1}(x))\left|\det\frac{\partial F^{-1}}{\partial x}\right|,
\end{align*}
where (*) is shown in the next Lemma \ref{lem:make_f_great_again}.
\end{proof}

\begin{lemma}\label{lem:make_f_great_again}
Let $F$ be a piece-wise affine bijection with affine regions $\mathcal{X}_1,\ldots,\mathcal{X}_n$. Further, let $f_1,\ldots, f_n$ be the affine functions that coincide with $F$ on $\mathcal{X}_i$, respectively. Then,
    \begin{align*}    
        \sum_{i=1}^n p_B(f_i^{-1}(x)) \left|\det\frac{\partial f_i^{-1}}{\partial x}\right| \cdot \mathbb{I}[F^{-1}(x)\in \mathcal{X}_i] = p_B(F^{-1}(x)) \sum_{i=1}^n \left|\det\frac{\partial f_i^{-1}}{\partial x}\right| \cdot \mathbb{I}[F^{-1}(x)\in \mathcal{X}_i].
    \end{align*}
\end{lemma}

\begin{proof}
    \begin{align*}
        \sum_{i=1}^n p_B(f_i^{-1}(x)) \left|\det\frac{\partial f_i^{-1}}{\partial x}\right| \cdot \mathbb{I}[F^{-1}(x)\in \mathcal{X}_i] &= \sum_{i=1}^n\sum_{j=1}^n p_B(f_j^{-1}(x)) \left|\det\frac{\partial f_i^{-1}}{\partial x}\right| \mathbb{I}[F^{-1}(x)\in \mathcal{X}_j]\delta_{ij} \\
                &\stackrel{\ast}{=} \sum_{i=1}^n \underbrace{\left(\sum_{j=1}^n p_B(f_j^{-1}(x))\mathbb{I}[F^{-1}(x)\in \mathcal{X}_j]\right)}_{=p_B(F^{-1}(x))}  \cdot \left |\det\frac{\partial f_i^{-1}}{\partial x}\right| \cdot \mathbb{I}[F^{-1}(x)\in \mathcal{X}_i] \\
                &= p_B(F^{-1}(x)) \sum_{i=1}^n \left|\det\frac{\partial f_i^{-1}}{\partial x}\right| \cdot \mathbb{I}[F^{-1}(x)\in \mathcal{X}_i] 
    \end{align*}
    where $(\ast)$ holds since 
\[\mathbb{I}[F^{-1}(x)\in R_j]\delta_{ij} = \mathbb{I}[F^{-1}(x)\in R_i]\cdot \mathbb{I}[F^{-1}(x)\in R_j],\] where $\delta_{ij}=\begin{cases} 1; i=j \\ 0; \text{ else}\end{cases}$ is the Kronecker-Delta.
\end{proof}

\section{VeriFlow: A Density-Preserving Architecture} 
In this section, we build upon the preliminary results and present the novel property of VeriFlow: the preservation of density level sets between the data distribution and learned distribution.
This relationship distinguishes uniformly scaling flows from most other flow classes and justifies using the density level sets of the base distribution as a presentation for those of the data distribution.
Specifically, Proposition~\ref{prop:levelsets} shows that uniformly scaling flows preserve a monotonic relationship between latent and learned densities:

\begin{restatable}[Density Level Set Preservation]{prop}{pudl}
  \label{prop:levelsets}
    Let $F$ be a uniformly scaling flow with base distribution $B$. Then $F$ maps upper density level sets of the base distribution to upper density level sets of the data distribution: $\mathrm{UDL}_{F(B)}(q) = F(\mathrm{UDL}_{B}(q))$, where $F(B)$ is the random variable defined by $F$ with base distribution $B$ and $F(\mathrm{UDL}_{B}(q))$ is the image of $\mathrm{UDL}_{B}(q)$ under $F$.
\end{restatable}

\begin{proof}
  This is a direct consequence of the change of variables formula.

\begin{align*}
    F\left( \{ x \mid \log p_D(x) > t \} \right) &= \{ F(x) \mid \log p_D(x) > t \} \\
    & = \left\{ F(x) \;\middle|\; \log p_B(F(x)) > t -
        \underbrace{\log \left| \det \frac{\partial F}{\partial x} \right|}_{\text{const}} \right\} \\
    & = \{ y \mid \log p_B(y) > t' \}
\end{align*}
  which is obviously an upper log-density level set w.r.t. the latent
  distribution $B$. The last equation holds since $F$ is a bijection and $\log \left| \textrm{det}\frac{\partial
  F}{\partial x}  \right|$ is constant. Note that $P_B(S) = P_{F(B)}(F(S))$ does always hold. 

\end{proof}

Combining Proposition~\ref{prop:levelsets} and Observation~\ref{prop:rad_udls} yields that uniformly scaling flows with radial base distributions (w.r.t $\ell_1$ or $\ell_\infty$ norm) allow for simple density level set  descriptions via linear constraints in latent space:

\begin{restatable}{cor}{us_and_rm}
  \label{cor:us_and_rm}
    Let $F$ be a uniformly scaling flow and $B$ a $k$-radial base distribution with radial profile $g$. Then 
    $\{|F^{-1}(x)|_k \mid x\in \text{UDL}_{F(B)}(q)\} = \{r \mid g(r)\geq t\}$
    where $t = \text{quantile}_{g(|B|_k)}(1-q)$. Moreover, if $B$ is radial monotonic, then 
    $\{|F^{-1}(x)|_k \mid x\in \text{UDL}_{F(B)}(q)\} = [0, \text{quantile}_{|B|_k}(q))$.

\end{restatable}
In particular, for a radial base distribution w.r.t. the $1$- or the $\infty$-norm, density level sets of the learned distribution can be defined in latent space using linear constraints.

Next we show that if the density of the base distribution is log-piecewise affine, then so is the log-density of the transformed distribution. Note that we do not need a uniformly scaling flow for the statement to hold.
\begin{restatable}{prop}{padensity}\label{prop:pw_affine}
  Let $p_D$ be defined by a piecewise affine flow $F$ and a log-piecewise affine base distribution $p_B$. Then $\log p_D$ is piecewise affine. \ 
\end{restatable}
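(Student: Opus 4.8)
The plan is to combine the piece-wise change-of-variables identity of \Cref{prop:pa-flow} with the assumed log-piece-wise affinity of the base density, and to push the two relevant partitions through $F$ to a common refinement on which $\log p_D$ is affine. Concretely, since $\log p_B$ is piece-wise affine and hence real-valued everywhere, $p_B$ is strictly positive, so taking logarithms in the identity of \Cref{prop:pa-flow} is legitimate and yields
$$\log p_D(y) \;=\; \log p_B\bigl(F^{-1}(y)\bigr) \;+\; \log\left|\det\tfrac{\partial F^{-1}}{\partial y}\right|.$$
It therefore suffices to exhibit, for each of the two summands on the right, a finite Borel partition of $\mathbb{R}^d$ on which that summand is affine; the common refinement then works for $\log p_D$.

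For the second summand I would use the structure of $F^{-1}$ already implicit in \Cref{prop:pa-flow}: each affine piece $f_i$ of $F$ is an invertible affine self-map of $\mathbb{R}^d$, so $f_i^{-1}$ is affine with nonsingular linear part, and on $S_i \coloneqq F(R_i) = f_i(R_i)$ we have $F^{-1} = f_i^{-1}$ with $\bigl|\det \partial f_i^{-1}/\partial y\bigr|$ a positive constant. Since $F$ is a bijection, $\{S_i\}_i$ is a partition of $\mathbb{R}^d$, and since invertible affine maps are homeomorphisms each $S_i$ is Borel. Hence the second summand is the piece-wise constant (in particular piece-wise affine) function $y \mapsto \log\bigl|\det \partial f_i^{-1}/\partial y\bigr|$ on $S_i$.

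For the first summand, write the assumed affine partition of the latent space as $Q_1,\dots,Q_m$ with affine functions $\ell_1,\dots,\ell_m$ such that $\log p_B(z) = \ell_k(z)$ for $z \in Q_k$, and set $T_{i,k} \coloneqq f_i(R_i \cap Q_k) = \{\, y : F^{-1}(y) \in R_i \cap Q_k \,\}$. Each $T_{i,k}$ is Borel (the image of a Borel set under the affine bijection $f_i$), the $T_{i,k}$ are pairwise disjoint and cover $\mathbb{R}^d$, and $T_{i,k} \subseteq S_i$. On $T_{i,k}$ we have $F^{-1}(y) = f_i^{-1}(y) \in Q_k$, so $\log p_B(F^{-1}(y)) = \ell_k(f_i^{-1}(y))$, a composition of affine maps and thus affine in $y$; adding the constant from the previous step, $\log p_D$ agrees on $T_{i,k}$ with the single affine map $y \mapsto \ell_k(f_i^{-1}(y)) + \log\bigl|\det \partial f_i^{-1}/\partial y\bigr|$. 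As $\{T_{i,k}\}_{i,k}$ is a finite Borel partition of $\mathbb{R}^d$, this proves the claim.

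The routine content is the affine bookkeeping. The points that need genuine care are: that each $f_i$ really is an invertible affine map, so that $f_i^{-1}$ and its Jacobian determinant are well defined (this is the hypothesis already in force in \Cref{prop:pa-flow}); that the affine images $S_i$ and $T_{i,k}$ of the Borel sets $R_i$ and $R_i \cap Q_k$ are again Borel and genuinely partition $\mathbb{R}^d$ (using that $F$ is a bijection and that invertible affine maps are homeomorphisms); and the elementary observation that $p_B > 0$ everywhere, which makes the logarithm finite. Granting these, the conclusion is immediate from the displayed identity.
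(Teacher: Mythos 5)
Your proposal is correct and follows essentially the same route as the paper's proof: take the logarithm of the change-of-variables identity, observe that the log-base-density composed with the piece-wise affine (inverse) flow is piece-wise affine, and that the log-Jacobian-determinant term is piece-wise constant. You merely make explicit the partition bookkeeping (the common refinement $T_{i,k}$ and the positivity of $p_B$) that the paper leaves implicit.
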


\begin{proof}
  As we have seen,
  
  \begin{align*}
    \log p_D (\mathbf{x}) &= \log
    p_{B} (F (x)) + \log \left| \textrm{det}\frac{\partial
  F}{\partial x}  \right|
  \end{align*}
  
  Since $F$ and $\log p_B(\cdot)$ are piecewise affine, $\log p_{B} (F (x))$ is also piecewise affine. Similarly, $\left|\textrm{det}\frac{\partial F}{\partial x} \right|$ is piecewise constant, which implies that $\log\left|\textrm{det}\frac{\partial F}{\partial x} \right|$ is piecewise constant too. The claim follows immediately.
\end{proof}

\subsection{The Components of VeriFlow} To identify learnable building blocks that satisfy our aforementioned requirements, we survey established flow architectures. 
A central finding is that additive conditioning layers with ReLU activations, such as additive coupling, yield precisely the kind of networks we seek. 
These include additive coupling layers~\cite{dinh_nice_2015}, additive auto-regressive layers~\cite{kingma_improved_2016, huang_neural_2018, papamakarios_masked_2017}, and masked additive convolutional flows such as MACow~\cite{ma_macow_2019}.
Furthermore, bijective affine transformations parameterized by LU-decomposed matrices, named LUNets~\cite{chan_lunet_2023}, are also suitable. 
LU layers constitute a powerful replacement for the permutations or masks that were originally proposed to be combined with coupling-like layers. Although this addition turned out to be very beneficial, we note that the number of parameters scales quadratically with the dimension, which is a bottleneck in high dimensions. The poor scalability of vanilla LU-layers led us to propose one-star-convolutions --- convolutions with kernel size 1 along all spatial dimensions, adapted to input topology (e.g., 1D for sequences, 2D for images).
We apply one-star-convolutions together with LU-decomposed bijective channel transforms, providing a natural way to define a LU-decomposed bijective affine transformation on the entire set of input components via parameter sharing~\cite{kingma2018glow}. Based on these layers, we first propose the VeriFlow architecture, and then provide an in-depth analysis of the above mentioned layers in the remainder of this section.

The VeriFlow architecture is built from blocks of shape $B_i = A_i^{-1}\circ C_i \circ A_i$, where $A_i$ is an affine transform given by a one-star-convolution with $LU$ decomposed bijective channel transform and $C$ is an additive coupling layer. This shape is motivated by the above analysis and results from the normalizing flow literature showing that applying an adjoint group action with respect to the subgroup of affine transforms of the group of diffeomorphisms over $\mathbb{R}^d$ to a coupling layer is more beneficial than simply alternating coupling and affine transforms~\cite{hoogeboom2019emerging, ma2019flow++}.
The complete flow is of shape
$
{F = A_{n+1} \circ B_n \circ B_{n-1} \circ \cdots \circ B_1,}
$
where the final affine transform ensures that the flow can have an arbitrary (constant) Jacobian determinant. Note that each block is volume preserving since determinants of 
$A_i$ and $A_i^{-1}$ cancel each other, and additive coupling layers are volume preserving by design~\cite{dinh_nice_2015}.
In conclusion, we obtain the following results for our architecture.

\begin{restatable}[VeriFlow is Uniformly Scaling]{observation}{veriflow-us}
\end{restatable}
\begin{restatable}[VeriFlow is Piecewise Affine]{observation}{veriflow-pl}\label{prop:pa-veriflow}
\end{restatable}
Next, we present an in-depth analysis of all building blocks of VeriFlow.

\subsubsection{Additive Coupling (NICE)} \label{sec:additive_coupling}
As it turns out, additive transformations yield precisely the properties that we need in order to guarantee the good properties of the previous section.
The simplest such architecture is realized by so-called additive coupling, which was first introduced for
the NICE architecture by~\cite{dinh_nice_2015}.
NICE belongs to the first flow architectures. Nevertheless, it is still a popular benchmark which has
shown good performance on multiple data sets.
A NICE flow is built from additive coupling layers. Each such layer $L$
consists of a partition $I_1, I_2$ of $[D]$, where $D$ is the data dimension,
and a conditioning function $m : \mathbb{R}^d \rightarrow \mathbb{R}^{D - d}$,
where $d = | I_1 |$. The layer $L$ maps $x$ to $y$ where
\begin{eqnarray*}
  y_{I_1} & = & x_{I_1}\\
  y_{I_2} & = & x_{I_2} + m (x_{I_1}) .
\end{eqnarray*}

It is easy to see that the Jacobian $\frac{\partial y}{\partial x} =
\left(\begin{array}{cc}
  I_d & 0\\
  \frac{\partial y_{I_2}}{\partial x_{I_1}} & I_{D-d}
\end{array}\right)$ is triangular and that all entries on the diagonal
are $1$. As the first $d$ components of the input remain unchanged, it is
usually necessary to employ multiple layers with varying partitions of the
input vector. It is straightforward to see that a coupling layer defines a piecewise affine function if the conditioner $m$ is piecewise affine.

\paragraph{Allowing Rescaling}

As all additive coupling layers have Jacobian determinant $1$, the same will
hold for their composition. That means the space is never stretched or compressed through the transformation, which potentially limits the expressiveness. In order to account for this issue, NICE allows
for a final  component-wise rescaling, i.e. multiplication with a matrix
$S$, where $S_{{ij}} \begin{cases}
  \neq 0 & \text{if } i = j\\
  = 0 & \text{else}
\end{cases}$. \

\paragraph{Computing log-Densities}

Because of the simple additive coupling, computing log-densities is
particularly simple. Let $F$ be a NICE flow with base distribution $B$, layers
$L_1, \ldots, L_n$, and scaling matrix S. Then
\begin{align*}
  \log (p_D (\mathbf{x})) & = \log \left( p_B (F (x)) \left|
  \textrm{det}\frac{\partial F}{\partial x} \right| \right) \nonumber\\
  & = \log \left( p_B (F (x)) \cdot \prod \left| \textrm{det}\frac{\partial
  L_i}{\partial x} \right| \cdot |\det S | \right) \nonumber\\
  & =  \log p_B (F (x)) + \underbrace{\sum \log \left| \textrm{det}\frac{\partial
  L_i}{\partial x}  \right|}_{= 0} + \log |\det S | \nonumber\\
  & = \log p_B (F (x)) + \sum \log S_{ii} 
\end{align*}

Computing $F^{- 1} (z)$ has exactly the same complexity as computing a forward
pass $F (x)$. Because in order to invert the flow we only need to multiply
with the inverse scaling matrix and then pass the input through the inverse
coupling layer in reverse order. Note that for an additive coupling layer $L =
((I_1, I_2), m)$ the inverse function can be implemented by $L^{- 1} = ((I_1,
I_2), - m)$.

\subsubsection{Masked Additive Coupling}
It is also possible to rewrite the additive coupling equation in order to implement the NICE architecture as a fully connected neural network with masking and skip connections. An additive 
coupling layer $\ell: \genfrac(){0pt}{2}{x_{I_1}}{x_{I_2}} \mapsto  \genfrac(){0pt}{2}{x_{I_1}}{x_{I_2}+c(x_{I_1})}$, whose conditioner is implemented by a neural network $c$ can equivalently be written as 
\begin{align}
\ell(x) = x + (1-\textrm{mask})\cdot c'(\textrm{mask}\cdot x),\label{eqn:mask}
\end{align}
where $\textrm{mask}$
is a $\{0, 1\}$-vector with $\textrm{mask}_i = 1 \Leftrightarrow i\in I_1$ and the multiplication is computed component-wise. Further, $c'$ is a fully connected network obtained by adding dummy inputs for the components in $I_2$ and dummy outputs for the components of $I_1$ to $m$, which are effectively eliminated by the mask in Equation \ref{eqn:mask}. 

\subsubsection{Additive Auto-Regression}
A general way to increase the expressiveness of the  based flow models is the use of auto-regression instead of coupling~\citep{kingma_improved_2016,huang_neural_2018,papamakarios_masked_2017}. In this case the conditioner is implemented as an RNN $c$, which couples the input component by component. More precisely, an additive auto-regressive flow layer $\ell$ computes a transformation $\ell(x) = y$ with
\begin{align*}
    &h_1, z_1 = 0;\hspace{1em} (h_{i + 1},z_{i+1}) = c(x_i, h_{i}) \\
    &y_i = x_i + z_i 
\end{align*}
Observe that the structure of the auto-regression still leads to a lower triangular shape of the Jacobian and the additive auto-regressive coupling ensures that all diagonal entries are $1$. With these properties, one easily checks Proposition \ref{prop:pa-flow}, \ref{prop:pw_affine} and \ref{prop:levelsets} remain valid if additive coupling is replaced by additive auto-regression.

\subsubsection{Masked Additive Convolutions}
The idea of masking was used by~\cite{ma_macow_2019} in order to transfer coupling to convolutional architectures where the input is a higher-order tensor. We can also employ this idea in our situation and still maintain the desired properties. In this case, Equation \ref{eqn:mask} is applied to a convolutional network, e.g. with a checker board and/or a channel-wise mask. As the reader readily verifies, the analogues of Proposition \ref{prop:pa-flow} - \ref{prop:levelsets} hold also for this layer.

\subsubsection{LUNets}
Recently, bijective fully-connected layers have been proposed by~\cite{chan_lunet_2023} as a so-called LUNet. The idea is to ensure that both, the affine transformation of a fully connected layer and the non-linearity are bijections. Bijectivity is ensured by representing the linear transform of the layer by an LU-factorization $A = LU$ with lower/upper triangular Matrices $L$ and $U$. Bijectivity is ensured by adding the constraints that the diagonal of $L$ contains only ones and the diagonal of $U$ is always non-zero. In this case, Propositions \ref{prop:pa-flow} and \ref{prop:pw_affine} will still hold if we replace the layer architecture and use leaky ReLU instead of ReLU, but Proposition \ref{prop:levelsets} will in general not hold anymore as the determinant of the layer Jacobian is not constant anymore. 

LUNet is a very different approach to guaranteeing the bijectivity of the transformation compared to additive coupling. It has the advantage that the entire input can be transformed by a single layer.
The restriction that the affine transform needs to be bijective, however, fixes the capacity of the transformation to $d^2$ parameters where $d$ is the input dimension. This can be problematic, especially when working with high-dimensional data. 

\subsubsection{Bijective Affine Layers}
The bijective affine transform $T(x) = (LU)x + b$ at the heart of an LU-layer deserves special attention. Note that the determinant of the Jacobian is constant for $T$. Since computing the inverse of an affine transform also has the complexity of computing the affine transform, it follows that we can add bijective affine layers to the list of benign layers. Bijective affine layers can be an interesting alternative to the intermediate permutation layers of the NICE architecture. Using an affine bijection instead of a simple fixed permutation of the dimensions allows the architecture to correlate the components of $I_1$ and $I_2$ in the subsequent coupling layer in a learnable fashion. As an example, consider the extreme case where all components of the target distribution are independent. In this case, the components $I_1$ and $I_2$ will be independent, no matter which permutation of the components we have applied beforehand. An affine bijection, however, can be capable of combining the variables in a way such that the components $I_1$ and $I_2$ become correlated.

\subsection{The Neuro-Symbolic Verification Pipeline}
The key idea of the \emph{neuro-symbolic verification} framework as proposed by ~\citet{DBLP:conf/ijcai/XieKN22} is to use neural networks as part of the specification to enable the verification of a variety of complex properties.
Our approach is aligned with the Neuro-Symbolic verification framework as we enable the verification of neural network properties over the UDLs or LDLs of proxy distributions learned by VeriFlow. 
The UDL in latent space captures likely inputs under the base distribution, while the UDL in the target (data) space captures typical inputs resembling those in the dataset. 
Verifying properties within a UDL thus proves/falsifies that the property holds for all \emph{likely} inputs.

The verification pipeline proceeds as follows.
First, we define a density level set in the latent space using linear constraints (Corollary~\ref{cor:us_and_rm}). 
These constraints are transformed through the flow model into output constraints that precisely characterize the corresponding upper density level set in the data space, when a constraint-based verification tool is employed (Proposition~\ref{prop:levelsets}). 
This is enabled by the piecewise linearity of the flow model (Proposition~\ref{prop:pa-flow}). 
In contrast, abstract interpretation uses the input bounds to compute an over-approximation of the output region, yielding a sound but conservative approximation of the same density level set.
We illustrate this pipeline using an abstract interpretation-based verifier in Figure~\ref{fig:abstact-interpretation}.

\begin{figure*}[t]
  \centering

\tikzset{every picture/.style={line width=0.75pt}} %set default line width to 0.75pt  
\resizebox{0.9\textwidth}{!}{%
\begin{tikzpicture}[x=0.75pt,y=0.75pt,yscale=-1,xscale=1]
%uncomment if require: \path (0,300); %set diagram left start at 0, and has height of 300

%Shape: Trapezoid [id:dp7372863431289652] 
\draw   (317.78,87.93) -- (392.54,109.46) -- (392.71,138.73) -- (318.2,161.11) -- cycle ;
%Straight Lines [id:da80877269525421] 
\draw    (110.02,51.5) -- (151.12,52.07)(109.98,54.5) -- (151.08,55.07) ;
\draw [shift={(160.1,53.7)}, rotate = 180.8] [fill={rgb, 255:red, 0; green, 0; blue, 0 }  ][line width=0.08]  [draw opacity=0] (10.72,-5.15) -- (0,0) -- (10.72,5.15) -- (7.12,0) -- cycle    ;
%Shape: Square [id:dp6699040446883605] 
\draw[color={rgb, 255:red, 184; green, 233; blue, 134 }] (47.86,85.25) -- (87.81,129.25) -- (43.82,169.19) -- (3.87,125.2) -- cycle ;
%Flowchart: Punched Tape [id:dp5215605685707421] 
\draw   (102,105.07) .. controls (102,107.7) and (109.32,109.84) .. (118.35,109.84) .. controls (127.39,109.84) and (134.71,107.7) .. (134.71,105.07) .. controls (134.71,102.44) and (142.03,100.3) .. (151.06,100.3) .. controls (160.09,100.3) and (167.42,102.44) .. (167.42,105.07) -- (167.42,143.23) .. controls (167.42,140.6) and (160.09,138.46) .. (151.06,138.46) .. controls (142.03,138.46) and (134.71,140.6) .. (134.71,143.23) .. controls (134.71,145.86) and (127.39,148) .. (118.35,148) .. controls (109.32,148) and (102,145.86) .. (102,143.23) -- cycle ;
%Straight Lines [id:da16055929021358417] 
\draw    (320.41,53) -- (360.79,52.75)(320.43,56) -- (360.81,55.75) ;
\draw [shift={(369.8,54.2)}, rotate = 179.65] [fill={rgb, 255:red, 0; green, 0; blue, 0 }  ][line width=0.08]  [draw opacity=0] (10.72,-5.15) -- (0,0) -- (10.72,5.15) -- (7.12,0) -- cycle    ;
%Shape: Rectangle [id:dp7691148695966731] 
\draw  [fill={rgb, 255:red, 184; green, 233; blue, 134 }  ,fill opacity=1 ] (261.42,69.5) -- (299.32,124.67) -- (222.42,177.5) -- (184.52,122.33) -- cycle ;
%Shape: Cloud [id:dp6366843789386959] 
\draw  [fill={rgb, 255:red, 255; green, 255; blue, 255 }  ,fill opacity=1 ] (203.96,130.14) .. controls (200.79,125.64) and (200.42,119.75) .. (202.98,114.97) .. controls (205.55,110.2) and (210.62,107.36) .. (216.03,107.68) .. controls (216.02,104.11) and (217.69,100.78) .. (220.51,98.69) .. controls (223.34,96.6) and (227,96) .. (230.38,97.08) .. controls (230.3,93.88) and (231.85,90.9) .. (234.49,89.18) .. controls (237.12,87.47) and (240.47,87.27) .. (243.34,88.65) .. controls (244.18,84.78) and (247.17,81.78) .. (251.01,80.94) .. controls (254.84,80.11) and (258.84,81.6) .. (261.28,84.77) .. controls (264.43,84.1) and (267.74,84.85) .. (270.35,86.83) .. controls (272.97,88.82) and (274.62,91.83) .. (274.89,95.1) .. controls (279.49,97.29) and (282.62,101.77) .. (283.14,106.85) .. controls (283.65,111.93) and (281.46,116.87) .. (277.39,119.81) .. controls (279.41,123.78) and (279.18,128.51) .. (276.77,132.18) .. controls (274.37,135.85) and (270.17,137.89) .. (265.79,137.51) .. controls (266.86,142.97) and (264.73,148.5) .. (260.33,151.72) .. controls (255.93,154.94) and (250.05,155.26) .. (245.22,152.55) .. controls (242.99,156.29) and (239.34,158.93) .. (235.11,159.87) .. controls (230.88,160.82) and (226.42,159.99) .. (222.73,157.57) .. controls (219.58,159.73) and (215.4,159.64) .. (212.27,157.36) .. controls (209.14,155.08) and (207.73,151.09) .. (208.73,147.37) .. controls (204.79,147.13) and (201.4,144.35) .. (200.32,140.48) .. controls (199.24,136.61) and (200.72,132.53) .. (203.99,130.36) ; \draw   (208.73,147.37) .. controls (210.59,147.48) and (212.43,147.01) .. (213.99,146.03)(222.73,157.57) .. controls (223.39,157.12) and (223.99,156.59) .. (224.51,155.98)(245.22,152.55) .. controls (244.33,152.05) and (243.5,151.45) .. (242.73,150.78)(265.79,137.51) .. controls (265.6,136.51) and (265.3,135.54) .. (264.91,134.6)(277.38,119.81) .. controls (275.23,115.59) and (270.88,112.95) .. (266.21,113.04)(274.89,95.1) .. controls (275.04,96.85) and (274.78,98.6) .. (274.15,100.21)(261.28,84.77) .. controls (261.68,85.29) and (262.03,85.85) .. (262.33,86.45)(243.34,88.65) .. controls (243.13,89.62) and (243.06,90.61) .. (243.13,91.6)(230.38,97.08) .. controls (230.4,97.85) and (230.52,98.61) .. (230.72,99.35)(216.03,107.68) .. controls (217.17,107.75) and (218.31,107.96) .. (219.4,108.3)(203.96,130.14) .. controls (204.39,130.76) and (204.88,131.34) .. (205.4,131.89) ;
%Straight Lines [id:da2933609644300379] 
\draw    (416.42,84.5) -- (436.42,84.5) ;
%Straight Lines [id:da41395804888811627] 
\draw    (426.42,84.5) -- (426.05,163.15) ;
%Straight Lines [id:da014552475847979585] 
\draw    (416.23,163.7) -- (435.23,163.7) ;
%Shape: Rectangle [id:dp5194029272837024] 
\draw  [fill={rgb, 255:red, 184; green, 233; blue, 134 }  ,fill opacity=1 ] (421.02,93.7) -- (431.25,93.7) -- (431.25,140) -- (421.02,140) -- cycle ;
%Shape: Rectangle [id:dp2590503398233669] 
\draw  [fill={rgb, 255:red, 255; green, 255; blue, 255 }  ,fill opacity=1 ] (421.01,101.25) -- (431.26,101.25) -- (431.26,132.45) -- (421.01,132.45) -- cycle ;
%Shape: Rectangle [id:dp2515909999746083] 
\draw   (484,13) -- (658.27,13) -- (658.27,83.3) -- (484,83.3) -- cycle ;
%Shape: Square [id:dp38336824057072894] 
\draw   (492.7,23) -- (512,23) -- (512,42.3) -- (492.7,42.3) -- cycle ;
%Shape: Square [id:dp29486939009960156] 
\draw  [color={rgb, 255:red, 0; green, 0; blue, 0 }  ,draw opacity=1 ][fill={rgb, 255:red, 184; green, 233; blue, 134 }  ,fill opacity=1 ] (492.7,54) -- (512,54) -- (512,73.3) -- (492.7,73.3) -- cycle ;
%Straight Lines [id:da6521806719772595] 
\draw    (439.35,102.02) -- (448.13,102.27) ;
%Straight Lines [id:da46961767246268393] 
\draw    (448.13,102.27) -- (448.25,132.88) ;
%Straight Lines [id:da745884310341465] 
\draw    (439.25,132.88) -- (448.25,132.88) ;
%Straight Lines [id:da8467124506012341] 
\draw    (448.19,117.57) -- (456.33,117.6) ;

% Text Node
\draw (15,42) node [anchor=north west][inner sep=0.75pt]   [align=left] {Latent UDL};
% Text Node
\draw (115.1,120.2) node [anchor=north west][inner sep=0.75pt]   [align=left] {Flow};
% Text Node
\draw (203,41) node [anchor=north west][inner sep=0.75pt]   [align=left] {Data UDL};
% Text Node
\draw (208.29,114.18) node [anchor=north west][inner sep=0.75pt]  [rotate=-1.33] [align=left] {$\displaystyle UDL_{D}( q)$};
% Text Node
\draw (20,116) node [anchor=north west][inner sep=0.75pt]   [align=left] {$\displaystyle \mathbb{B}_{p}^{d} (r(q))$};
% Text Node
\draw (324,116) node [anchor=north west][inner sep=0.75pt]   [align=left] {Classifier};
% Text Node
\draw (397.95,33) node [anchor=north west][inner sep=0.75pt]   [align=left] {Classifier\\Range};
% Text Node
\draw (443,74) node [anchor=north west][inner sep=0.75pt]   [align=left] {$\displaystyle 1$};
% Text Node
\draw (439,154) node [anchor=north west][inner sep=0.75pt]   [align=left] {$\displaystyle 0$};
% Text Node
\draw (522,27) node [anchor=north west][inner sep=0.75pt]   [align=left] {Exact Set};
% Text Node
\draw (522.4,56) node [anchor=north west][inner sep=0.75pt]   [align=left] {Over-approximation};
% Text Node
\draw (457.6,109) node [anchor=north west][inner sep=0.75pt]   [align=left] {$\displaystyle C( UDL_{D}( q))$};
\end{tikzpicture}
} % resizebox
\caption{Visualization of a verification pipeline for an in-distribution verification of a classifier using VeriFlow. The procedure starts by defining the UDL exactly in the latent space. The true classification range w.r.t. the UDL equals the result of pushing the set consecutively through the flow and the classifier. An over-approximation can be obtained via abstract interpretation.}\label{fig:abstact-interpretation}
\end{figure*}
To ensure their reliability in formal verification settings, we carefully evaluate and validate the trained flow models.
\subsubsection{Validation and Calibration}
Capturing the true input distribution by the flow as closely as possible is crucial for meaningful verification results. 
Therefore, assessing the deviation between true and learned distribution empirically is a natural part of the verification pipeline. 
For normalizing flows, typical assessment methods exploit that the maximum likelihood objective of the flow training is equivalent to minimizing the KL divergence between the empirical and the defined base distribution. 
One can therefore measure discrepancies in the latent space~\cite{DBLP:journals/jmlr/PapamakariosNRM21, DBLP:conf/nips/LinhartG023}.
Here, we use the term empirical base distribution to denote the pre-image of the true data distribution under the flow.

Since we work with radial base distributions and are focused on density level sets, we test for discrepancies in norm distribution and radiality separately. More precisely, we propose the following measures to assess the quality of the empirical latent norm distribution and the defined norm distribution in the latent space:
\begin{enumerate}[label={(\roman*)}]
\item KS Statistic which quantifies the maximum difference between two cumulative distribution functions: $\sup_{r\in\mathbb{R}^+} |\text{CDF}_{|F^{-1}(X)|}(r) - \text{CDF}_{|B|}(r)|$~\cite{Kolmogorov1933, Smirnov1939},
\item and kernel density estimates (KDEs) and probability–probability (PP) plots using a consistent non-parametric estimator. KDEs visualize the density of the transformed and base distributions, while PP-plots compare their empirical quantiles.
\end{enumerate}
These measures give us quantitative information about the maximal and the local deviation between the learned and the true distribution, measured in probability. 
Since we project into the one-dimensional norm-space, we can obtain reliable estimates in a sample-efficient way. The tests regarding the norm-distribution are complemented by two radiality tests: \begin{enumerate}[label={(\roman*)}]
\item We assess sign symmetry using a binomial test --- that is, whether positive and negative signs in the latent dimensions appear with equal frequency, as would be expected in a truly symmetric radial distribution, 
\item We apply an energy distance test to the normalized absolute values of the latent vectors to assess whether these directions are uniformly distributed over the simplex (projection to the positive part of $\ell_1$ sphere), another key property of a radial distribution~\cite{binomial_test_2008, rizzo_energy_2016}. 
\end{enumerate}
All tests provide $p$-values, which we can combine into a single rejection criterion by applying Bonferroni correction~\cite{Dunn1961}. 
The separate assessment of norm distribution and radiality allows us to distinguish different types of discrepancies. For instance, a well-aligned empirical norm-distribution but with discrepancies in radiality indicates that the flow over-approximated the true distribution.

In case that moderate discrepancies in the norm distribution have been identified, e.g. $0.05>p>0.0001$, we may use the norm distribution of the empirical base distribution for recalibration: 
 We adjust the thresholds for the base distribution such that the returned density level set in the target distribution contains the desired fraction of the data. 
 
\subsubsection{Interpretability}
Recalibration of density level sets allows to maintain interpretability, even in the presence of discrepancies between the true and the learned distribution: Calibration against a test set $D_\text{Cal}$ ensures that a density level set with target probability $q$ will indeed contain a $q$-fraction of $D_\text{Cal}$. Hence, a successful verification will always establish a stronger statement than the empirical check $P_{x\sim D_\text{Cal}}(\phi(x)) \geq q$, since an infinite number of variants are verified along with the contained data.  

\section{Experiments}

We implement the VeriFlow as an open-source Python library that can be trained on all datasets mentioned in this paper in an out-of-the-box fashion. Our library ensures that definable flows guarantee all the theoretical properties outlined in this paper. The library implements general radial distributions with learnable norm distribution for various norms. 
The resulting flow models can be exported in ONNX format, relying exclusively on ONNX node types that are supported by the VNN-LIB standard~\cite{DBLP:conf/cav/DemarchiGPT23}. Additionally, we provide evaluation methods for the previously outlined quality assessment of the flow.

\subsection{Evaluation \& Validation} \label{sec:eval-validate}
A crucial step before using VeriFlow in verification is to empirically check whether the data distribution was learned faithfully.
To this end, we showcase our validation procedure on flows that were trained on MNIST digits, which are of particular interest for the verification experiments. As proposed earlier, we use KS Statistic~\cite{Kolmogorov1933, Smirnov1939} and PP-Plots for validation. 
Results are shown in Figure~\ref{fig:flow-eval} for all MNIST digits and in Figure~\ref{fig:flow-eval:fashion} for all FashionMNIST classes. 
Both validation methods show that the latent norm distribution of the data matches the defined latent norm distribution almost perfectly. 

For the KS statistic, we provide an example with the flow trained on digit $0$ (because we use digit $0$ for the subsequent verification experiment) and confirm that the KS value of $0.0375$ ($p=0.152 \gg 0.05$) gives no statistically significant evidence for discrepancies. The other digits yield similar results, though not all directly meet the acceptance criterion.
The radiality test, however, fails. Despite an average $p$-value of $0.0778$, only about half of the dimensions pass the Bonferroni corrected sign symmetry test ($p>0.05/(2\cdot 784)$).
Similarly, the projected uniformity test fails with $p$ being numerically $0$. 
We conclude that the learned UDLs contain the corresponding true data UDLs very well but are also likely to contain some OOD data (over approximation). 
Evaluations for the remaining digits as well as samples from the learned distribution are contained in the Appendix.
These validations of VeriFlow’s faithfulness, together with the theoretical analysis presented in the previous section, support its use in the verification domain. 
In the following section, we demonstrate how VeriFlow can be leveraged to enhance the verification of a global property.

\begin{figure*}[htbp]
    \centering
    \begin{subfigure}[b]{0.48\textwidth}
        \centering
        \resizebox{\linewidth}{!}{%
            \input{pp_plot_combined-mixure.pgf}
        }
        \caption{MNIST digits}
        \label{fig:flow-eval}
    \end{subfigure}
    \hfill
    \begin{subfigure}[b]{0.48\textwidth}
        \centering
        \includegraphics[width=\linewidth]{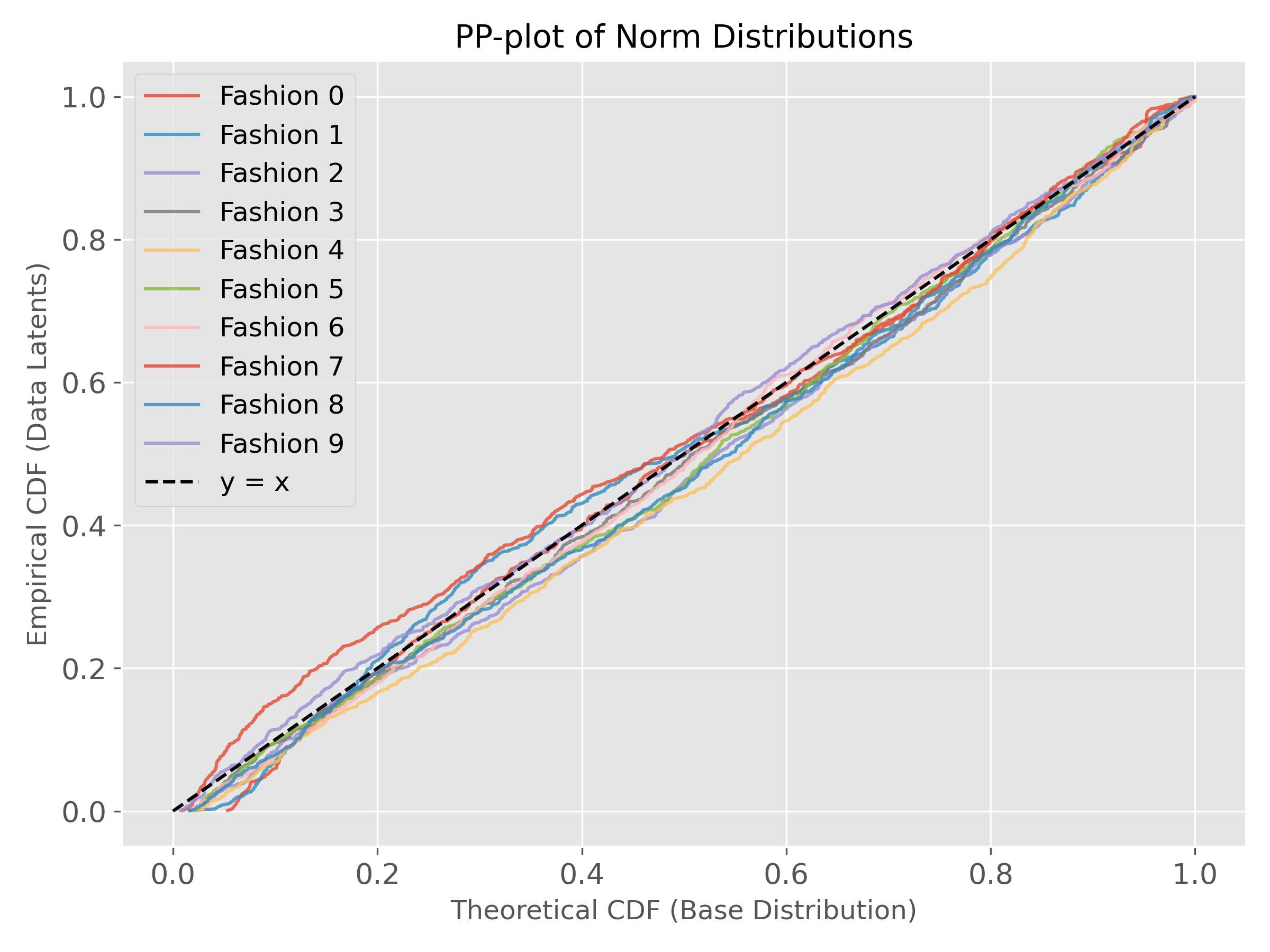}
        \caption{FashionMNIST classes}
        \label{fig:flow-eval:fashion}
    \end{subfigure}
    \caption{Comparison of PP-plots for VeriFlow on MNIST and FashionMNIST datasets. In both cases, the flexibility of the learnable norm distribution allows the model to tightly align norm distributions and the latent norms of the data.}
    \label{fig:flow-eval:comparison}
\end{figure*}

\newcommand{\classifier}{c}
\newcommand{\flow}{F}
\newcommand{\dataset}{\mathcal{D}}
\newcommand{\dist}{\mathrm{dist}}
\newcommand{\density}{q}
\subsection{Robustness Verification} 

We demonstrate one use case of VeriFlow in the context of local robustness --- a popular benchmark specification~\cite{DBLP:journals/corr/abs-2412-19985}.
Local robustness involves checking whether a neural network's prediction remains unchanged for minor perturbations.
More formally, let \( \classifier: \mathbb{R}^n \to \mathbb{R}^m \) be a classifier, let \( \dataset \subseteq \mathbb{R}^n \) be a dataset. The network is \emph{locally robust} for a point $x\in \dataset$ w.r.t. a distance metric \( \dist \) and radius \( \epsilon > 0 \), if the following specification is satisfied:
\begin{gather*}
    \phi\colon\bigl\{x,x'  \in\mathbb{R}^n \bigr\} \\
    \mathrel{ y \gets \classifier(x) \hspace{6pt} y' \gets \classifier(x')} \nonumber\\
    \psi\colon\bigl\{ \dist(x,x')\leq \epsilon \Rightarrow \arg\max_{i} y = \arg\max_{i} y' \bigr\}\nonumber
\end{gather*}
This~\citet{DBLP:journals/cacm/Hoare69} triple notation \(\{\phi\}\ \texttt{assignments}\ \{\psi\}\) corresponds to the logical form \(\phi(x) \Rightarrow \psi(f(x))\) from the background section.
Clearly, the runtime for verifying local robustness over the entire dataset increases linearly with its size $|\dataset|$. 

To tackle this scalability problem, we demonstrate how to leverage our flow model to verify local robustness on multiple instances with only one verification run.
To formalize this, let $\flow:\mathbb{R}^n \to \mathbb{R}^n$ be a flow model with base distribution $B$ and $\classifier: \mathbb{R}^n \to \mathbb{R}^m $ be a classifier.
Then, we propose verifying the following global robustness specification:
\begin{gather*}
    \phi\colon\bigl\{x_\ell \in UDL_B(\density), x'  \in\mathbb{R}^n \bigr\} \\
    \mathrel{ x_t \gets \flow(x_\ell) \hspace{6pt} y \gets \classifier(x_t)} \hspace{6pt} \mathrel{y' \gets \classifier(x')} \\
    \psi\colon\bigl\{ \dist(x_t,x')\leq \epsilon \Rightarrow \arg\max_{i} y = \arg\max_{i} y' \bigr\}
\end{gather*}
where $\density \in [0,1]$ is a probability density threshold. By the design of our flow model, the data contained in the UDL, $\{\flow(x_\ell) \mid x_\ell\in UDL_B(\density)\}$, includes $1-\density$ fraction of the dataset $D$.
Intuitively, our specification is a new notion of \textit{global robustness} as our formulation becomes independent of the dataset $\dataset$, but still keeps the input space restricted to meaningful inputs aligned with the data distribution $\dataset$.
In other words, we verify robustness for all likely inputs. 
Therefore, a certificate produced for our global robustness specification also holds for the instance-wise local robustness specification for a $1-\density$ fraction of the dataset $\dataset$.
Note that the inverse direction does not apply: a robustness certificate for some fraction of the dataset $\dataset$ does not imply global robustness as proposed in our specification due to the additional synthetic data that may be contained in our flow model.

\begin{figure}[t]
    \centering
    \begin{subfigure}[b]{0.55\textwidth}
        \centering
        \resizebox{\linewidth}{!}{%
            \input{runtime-combined-overlay.pgf}
        }
        \caption{Runtimes of robustness verification and falsification.}
        \label{fig:robust-runtime-cmp}
    \end{subfigure}
    \hfill
    \begin{subfigure}[b]{0.3\textwidth}
        \centering
        \resizebox{\linewidth}{!}{%
            \includegraphics{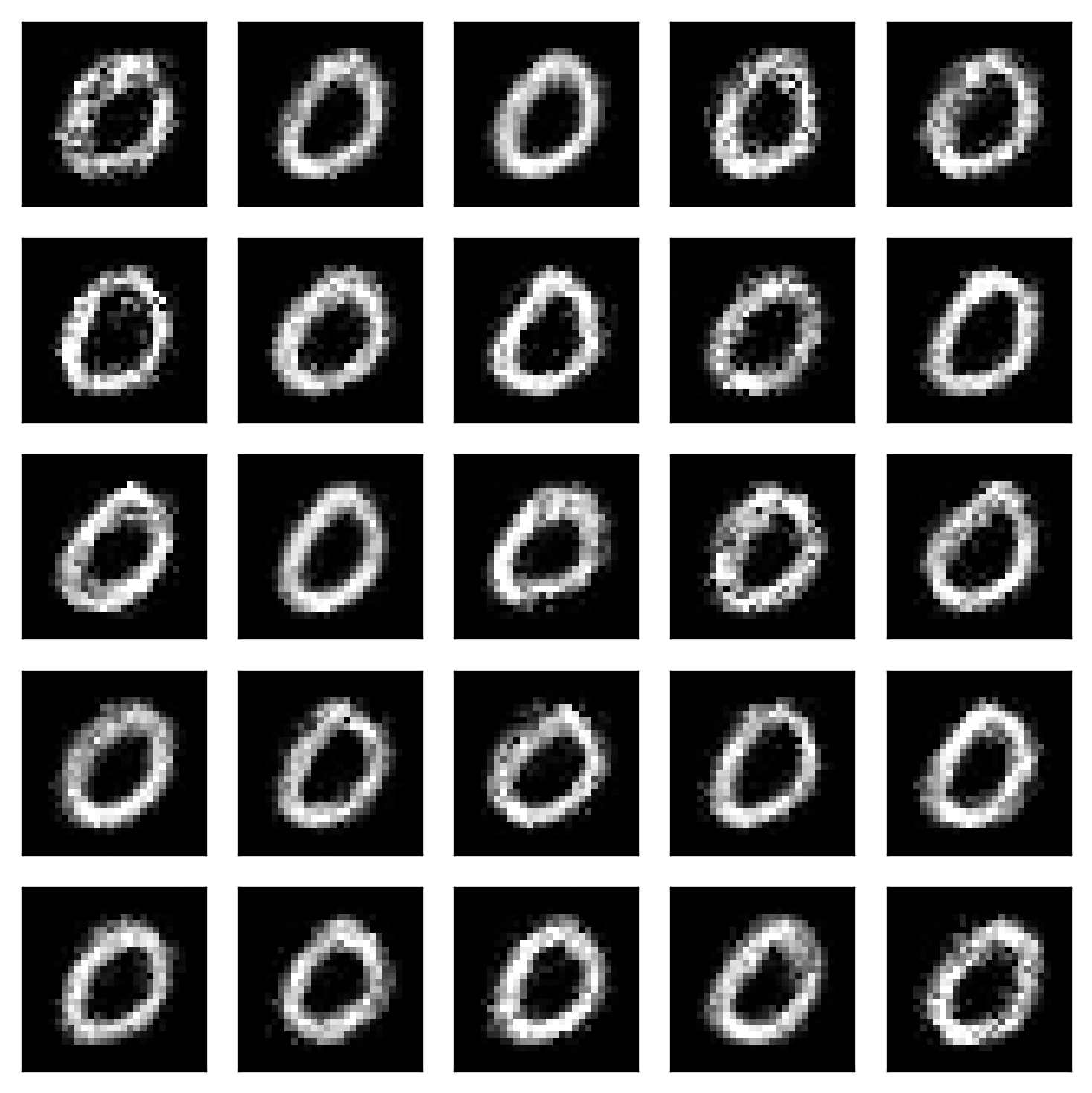}
        }
        \caption{Random samples from the flow model used in the verification experiments.}
        \label{fig:flow-quality-verification}
    \end{subfigure}
    \caption{Runtime verification results and model samples.}
\end{figure}

In order to practically evaluate how the runtime of verifying global robustness compares to verifying local robustness, we conduct experiments on the MNIST dataset. 
Specifically, we aim to analyze after how many instance-wise verifications the overhead of integrating a flow model into the specification is paid off.
To this end, we train our VeriFlow model \( \flow: \mathbb{R}^{28 \times 28} \to \mathbb{R}^{28 \times 28} \) on digit-zero samples from the MNIST dataset (See Figure~\ref{fig:flow-quality-verification} for samples). 
Our convolutional flow model has one coupling block with the same adjoint coupling architecture as presented in the main section and uses a $1$-radial lognormal base distribution. Training is performed using the Adam optimizer. For classification, we employ a two-layer ReLU network defined as \( c: \mathbb{R}^{28 \times 28} \to \mathbb{R}^{10} \).
We employ the SOTA verification tools Marabou 2~\cite{DBLP:conf/cav/WuIZTDKRAJBHLWZKKB24}, and the ABCrown framework's alpha-crown implementation~\cite{DBLP:conf/iclr/XuZ0WJLH21} with branch-and-bound for complete verification.
Specifically, we use (only) Marabou 2 to verify our global robustness specification since ABCrown does not support the type of specification proposed here~\cite{DBLP:journals/corr/abs-2506-12084}.
For verification, we use both Marabou 2 and ABCrown on a Dell Latitude 7440 Laptop with an i7 processor, 32 GB RAM, and Linux 22.04.
In all our verification experiments, we first train VeriFlow in a preceding step and export it to the ONNX format, using only node types that are supported by both Marabou and ABCrown.
Since our global robustness specification requires reasoning jointly about the flow model and the classifier within a single MILP instance, we merge both models into a single ONNX graph. This step is necessary because most verification tools, including Marabou, do not support loading multiple separate ONNX files.
To encode global robustness, we insert two identical copies of the classifier into the ONNX file --- one for the original input and one for the perturbed input --- consistent with our formal definition of global robustness. The combined ONNX model is then parsed by Marabou for verification.
Parsing this composite model required minor modifications to Marabou's source code, due to inconsistencies of Marabou with the ONNX specification~\citep{onnx}. In particular, we made the following adjustments locally and reported them:
\begin{enumerate}[label=(\roman*)]
    \item \textbf{Scalar multiplication (\texttt{Mul})}: Marabou's parser incorrectly assumed that scalar constants always appear in a specific operand position, an assumption not enforced by ONNX. We removed this assumption to allow for arbitrary operand ordering.
    \item \textbf{\texttt{Flatten} operator}: The parser made incorrect assumptions about parameter ordering, and we observed deviations in the flattened output compared to the official ONNX runtime. We fixed both issues in the parser to ensure correct behavior.
    \item \textbf{Broadcasting in \texttt{Mul} and \texttt{Sub} operations}: Marabou lacked support for automatic shape broadcasting, which is part of the ONNX specification. We extended the parser to handle basic broadcasting cases that occur in our flow model.
\end{enumerate}
We also added custom constraints to the ONNX graph to link the output of the flow model to the input of the two classifiers, with a small perturbation applied to one of them in order to encode the robustness condition precisely.
These modifications ensure a correct translation of the ONNX model into the internal MILP representation. 
However, despite these parser fixes, we observed significant deviations between the outputs produced by Marabou and those computed via the ONNX runtime, and reported them to the Marabou developers. 
As a workaround, we adapt our experimental setup to avoid configurations that trigger the aforementioned issue, while still enabling meaningful runtime analysis, which we describe in the next paragraph.

In the specification of our global robustness verification experiment, instead of restricting the input $x$ to the exact UDL, the values of $x$ are restricted only to a small box centered within the high-density region of the distribution. 
To keep all experiments consistent, we sample from the aforementioned box of the flow model as input for the instance-wise local robustness verification (instead of picking points from the dataset $\dataset$ as presented earlier). 
By doing so, we ensure that the robustness certificate of the flow-based global robustness specification still implies instance-wise robustness.
On a high level, we conduct two variations of our global robustness verification experiment. One where the perturbation is high ($\epsilon=0.1$) for measuring the runtime for \textit{falsification} when the network is not robust, and one where the perturbation is low ($\epsilon=0.001$) for \textit{verification} when the network is indeed robust for all inputs.
We visualize both experiments in the same plot in Figure~\ref{fig:robust-runtime-cmp}.
Our results in Figure~\ref{fig:robust-runtime-cmp} demonstrate after how many instance-wise verifications the additional overhead introduced by the flow model becomes worthwhile.
Specifically, Figure~\ref{fig:robust-runtime-cmp} plots the cumulative runtime of instance-wise verification (falsification) runs to the runtime of our global robustness verification (falsification).
There, we can observe that after around $170$ samples for Marabou and $160$ samples for ABCrown, the runtime for instance-wise verification exceeds the runtime of our global robustness verification of $3.7$ seconds.
For falsification, we can observe that after around $23$ samples with Marabou and $7$ samples with ABCrown, the cumulative runtime of local robustness falsification exceeds the runtime of our global robustness falsification.
Typically, one would expect verification to be computationally more expensive than falsification. 
We conjecture that this is due to the bound propagation method used in Marabou which simplifies, or even trivializes the verification condition in our experiments. 
Since the MNIST test set contains around $1000$ instances for each digit, a (successful) verification of our global robustness specification could already pay off after $20\%$ of the test cases.

\subsection{Ablation Study}
In this section, we show the effectiveness of our architecture as a density estimator and generative model. 
Specifically, we compare VeriFlow against MACow~\cite{ma_macow_2019}, a well-established flow-based model built exclusively from masked additive convolutional coupling layers.
The key difference between these two architectures is that VeriFlow extends MACow by applying an adjoint affine group action to the coupling layers. 
Furthermore, VeriFlow uses a custom radial base distribution with a mixture of gammas norm-distribution, while MACow uses the classical standard normal. 
Otherwise, the parametrization is kept the same, except for a learning rate of $10^{-3}$ and a patience of $3$ for VeriFlow and a learning rate of $10^{-5}$ a patience of $10$ for MACow --- this was necessitated because the training of MACow was unstable otherwise. We use $5$ coupling blocks (except for CIFAR-10, where we use $10$ blocks) and $3$ layers per conditioner with a uniform kernel size of $3$ for both. We use the SophiaG optimizer~\cite{sophia2023}.
Following the description given by~\cite{chan_lunet_2023}, we regularize the parameters of the LU layers. Without any form regularization, we observed exploding determinants on many tasks when working with LU layers.

\begin{wrapfigure}{r}{0.45\linewidth}
    \centering
            \begin{tabular}{c S[table-format=-4.4] S[table-format=-4.4]}
            \toprule
            \textbf{Dataset} & \textbf{MACow} & \textbf{Veriflow (Ours)}\\
            \midrule
            MNIST digit 0 & -1719.908 & \textbf{-2320.757} \\
            MNIST digit 1 & -1577.637 & \textbf{-3123.905} \\
            MNIST digit 2 & -1026.944 & \textbf{-2176.636} \\
            MNIST digit 3 & -1237.165 & \textbf{-2531.264} \\
            MNIST digit 4 & -1033.265 & \textbf{-2254.180} \\
            MNIST digit 5 & -1651.304 & \textbf{-3013.184} \\
            MNIST digit 6 & -1331.267 & \textbf{-2820.397} \\
            MNIST digit 7 &  -891.180 & \textbf{-2238.973} \\
            MNIST digit 8 & -1694.938 & \textbf{-2971.620} \\
            MNIST digit 9 & -2006.115 & \textbf{-3062.877} \\
            \midrule
            Full MNIST & -1250.627 & \textbf{-2389.460} \\
            Fashion MNIST & -847.522 & \textbf{-1386.659} \\
            CIFAR-10 & -4215.694 & \textbf{-4500.373} \\
            \bottomrule
            \end{tabular}
        \caption{Negative log-likelihood (NLL) comparison of MACow and VeriFlow across datasets}
            \label{tab:ablation_results}
\end{wrapfigure}

We train our models on the individual MNIST digits, on the full MNIST, FashionMNIST, and CIFAR-10 datasets on an IdeaPad Laptop with a Ryzen7 7735hs chip and 16GB of RAM. 
As usual for density estimation on (8 bit) images, we used (uniform) dequantization to obtain a continuous distribution, and report the NLL of dequantized images~\cite{DBLP:journals/access/TsubotaA23}. 
We summarize our results in Table~\ref{tab:ablation_results} where we can observe more stable training and considerably higher likelihoods with our architecture.

These quantitative results are also visible in the quality of the random samples from each flow model: Figures~\ref{fig:digi_samples_veriflow} and~\ref{fig:digi_samples_macow} for MNIST, and Figures~\ref{fig:fashion_samples_veriflow} and~\ref{fig:fashion_samples_macow} for FashionMNIST.
For the CIFAR-10 dataset, both models were unable to provide good samples with the architectures that we explored (Figure~\ref{fig:cifar_samples_macow}).

We also compare VeriFlow and MACow in terms of how \textit{faithfully} they capture the data distribution as previously explained in Section~\ref{sec:eval-validate}.
Figures~\ref{fig:flow-eval} and~\ref{fig:flow-eval:fashion} already demonstrated VeriFlow’s ability to align latent norms with the true distribution.
For comparison, Figures~\ref{fig:macow-eval:mnist} and~\ref{fig:macow-eval:fashion} present results for MACow.
The PP-plots reveal that MACow fails to learn the data distribution faithfully.
To complement these evaluations, we provide KS Statistics of all models trained on MNIST digits in Table~\ref{fig:eval:KS-benchmark}.

\begin{figure*}
    \centering
    \begin{subfigure}[b]{0.4\textwidth}
        \centering
        \includegraphics[width=\linewidth]{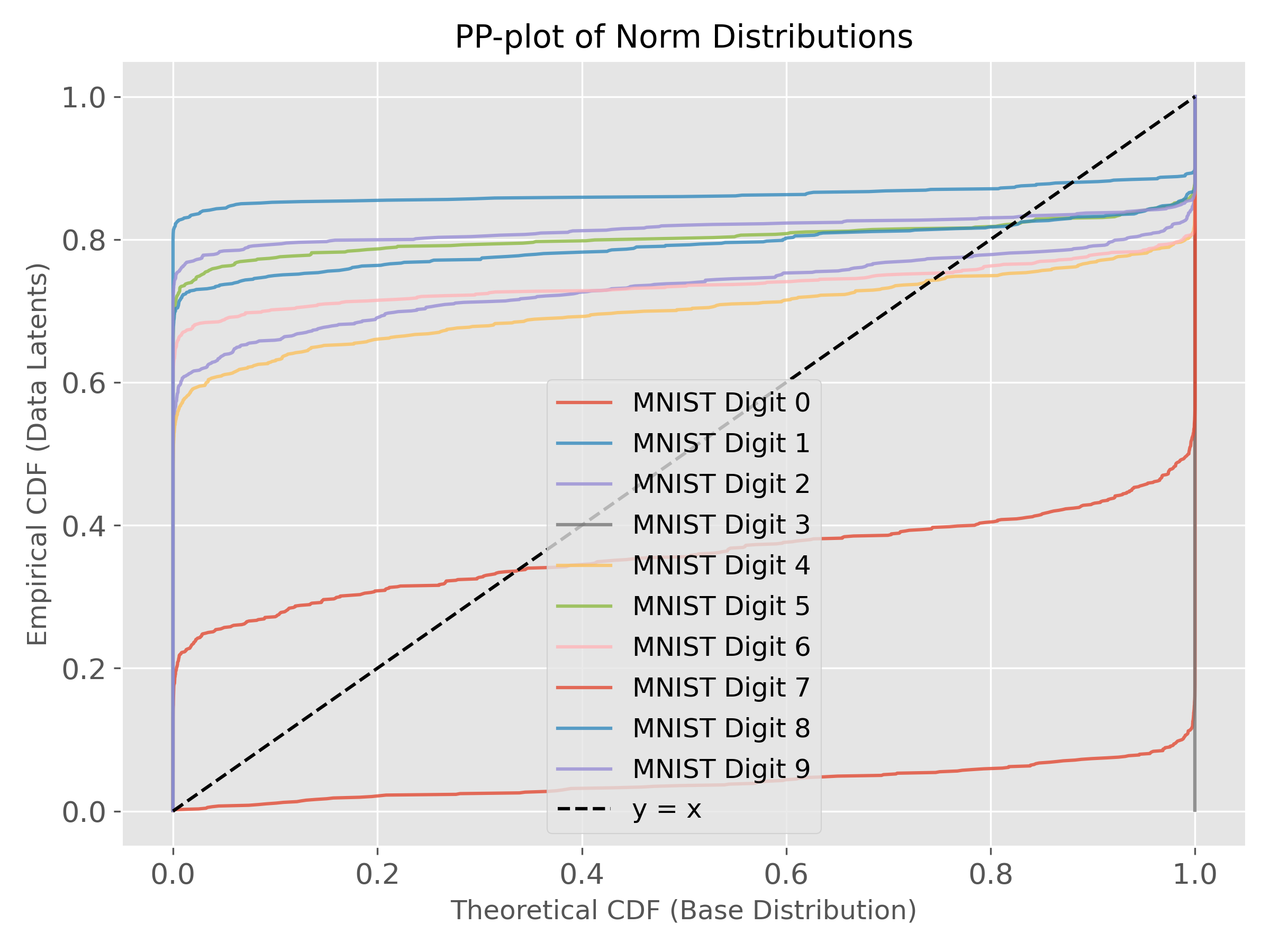}
        \caption{MNIST digits.}
        \label{fig:macow-eval:mnist}
    \end{subfigure}
    \hfill
    \begin{subfigure}[b]{0.4\textwidth}
        \centering
        \includegraphics[width=\linewidth]{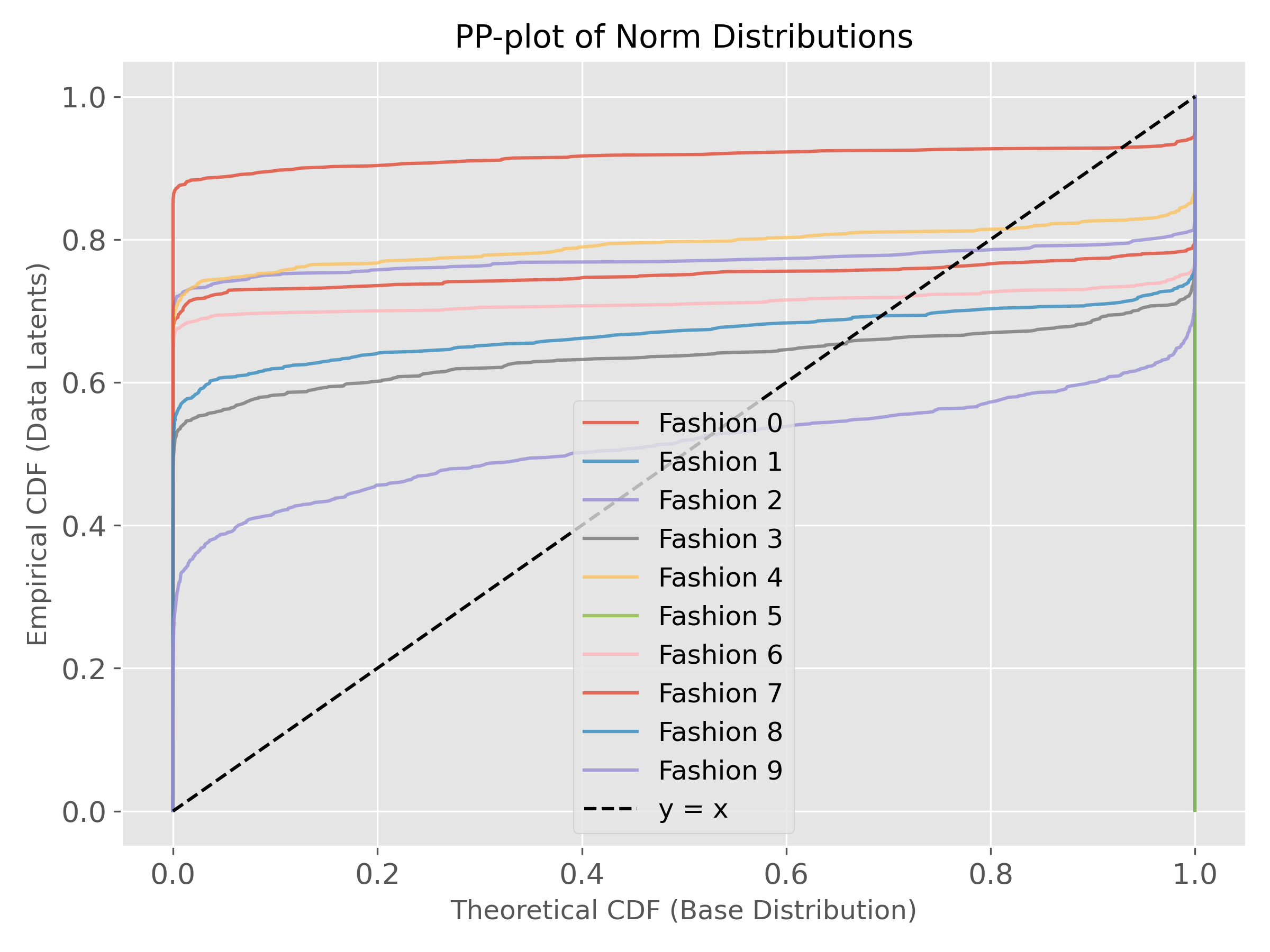}
        \caption{FashionMNIST classes}
        \label{fig:macow-eval:fashion}
    \end{subfigure}
    \caption{Comparison of PP-plots for MACow baseline models on MNIST and FashionMNIST datasets.}
    \label{fig:macow-eval:comparison}
\end{figure*}

\begin{figure*}[b]
    \centering
            \begin{tabular}{c S[table-format=-4.4] S[table-format=-4.4] S[table-format=-4.4] S[table-format=-4.4]}
            \toprule
            \textbf{Dataset} & \textbf{MACow KS} & \textbf{MACow $p$} & \textbf{Veriflow (Ours) KS}
            & \textbf{Veriflow (Ours) $p$}\\
            \midrule
            MNIST digit 0 & 0.887 & \approx 0.0 &  \textbf{0.037} & \textbf{0.152}\\
            MNIST digit 1 & 0.824 & \approx 0.0 & \textbf{0.130} & \textbf{$\approx$ 0.0} \\
            MNIST digit 2 & 0.599 & \approx 0.0 & \textbf{0.066} & \textbf{0.0006}\\
            MNIST digit 3 & 1.0 & \approx 0.0 & \textbf{0.059} & \textbf{0.003} \\
            MNIST digit 4 & 0.575 & \approx 0.0 & \textbf{0.099} & \textbf{$\approx$ 0.0}\\
            MNIST digit 5 & 0.731 & \approx 0.0 & \textbf{0.044} & \textbf{0.079}\\
            MNIST digit 6 & 0.658 & \approx 0.0 & \textbf{0.082} &  \textbf{$\approx$ 0.0}\\
            MNIST digit 7 &  0.505 & \approx 0.0 & \textbf{0.045} & \textbf{0.048}\\
            MNIST digit 8 & 0.706 & \approx 0.0 & \textbf{0.082} & \textbf{$\approx$ 0.0}\\
            MNIST digit 9 & 0.753 & \approx 0.0 & \textbf{0.063} & \textbf{0.001}\\
            \bottomrule
            \end{tabular}
        \caption{KS Statistics of all models trained on MNIST digits. A lower KS value indicates closer agreement between the learned and true distribution, while a higher $p$-value indicates that the difference is not statistically significant.
        All $\approx$0.0 entries denote values smaller $10^{-5}$. We omit to report the radiality tests, since no model was able to pass it.}
        \label{fig:eval:KS-benchmark}
\end{figure*}

\section{Related Work \& Conclusion}
\textbf{Flow models} are on the forefront of modern density estimation and have received significant attention over the last decade~\cite{papamakarios_normalizing_2019}. A constant Jacobian determinant is usually observed at the time of the introduction of the respective layer in the context of the likelihood computation 
~\cite{dinh_nice_2015, ma_macow_2019, kingma_improved_2016,huang_neural_2018,papamakarios_masked_2017}, although typically without further investigation of the induced properties. The role of the Jacobian determinant in general has been investigated in the context of the exploding determinant phenomenon~\cite{kim_softflow_2020,DBLP:journals/corr/abs-2102-06539,lyu_paracflows_2022}. Deeper investigations of uniformly scaling flows have been conducted in~\hbox{\cite{maziarka_oneflow_2022}}, where connections to DeepSVDDs are established by training flows with a volume minimization objective and, recently, in~\cite{Draxler_universality_2024}, which shows that learnable norm-distributions are necessary for the universality of uniformly scaling flows. The latter result confirms observations that we made during our experiments and that led us to implement learnable norm-distributions via mixture models.

Using \textbf{neural networks as part of the specification} to capture semantic properties --- such as meaningful perturbations for improving robustness verification --- has been explored in prior work~\cite{DBLP:journals/corr/abs-2004-14756,DBLP:conf/cvpr/MohapatraWC0D20,DBLP:conf/satml/0001TRYMPHPB23}.
The \emph{neuro-symbolic verification} framework generalizes this idea by proposing a specification language that supports the verification of a neural network with properties specified by other neural networks.~\cite{DBLP:conf/ijcai/XieKN22}.

Most \textbf{verification infrastructure} supports the neural architecture of our flow model and are able to precisely represent the shape of its UDL.
Specifically, Polytopes~\cite{DBLP:conf/aplas/ChenMC08}, Zonotopes, or even simple boxes are sufficient for representing the UDL of our flow model in the latent space precisely. 
This enables the use of a variety of domains such as Deepzono~\cite{DBLP:conf/nips/SinghGMPV18}, DeepPoly~\cite{DBLP:journals/pacmpl/SinghGPV19}, GPUPoly~\cite{DBLP:conf/iclr/SinghGPV19}, RefineZono~\cite{DBLP:conf/iclr/SinghGPV19}, and DiffPoly~\cite{10.1145/3656377}. 
Besides Marabou and ABCrown, all other verifiers that support the VNN-Lib standard~\cite{DBLP:conf/cav/DemarchiGPT23} are conceivable for verification with VeriFlow. 
We refer to the VNN Competition~\cite{brix2023fourth} for an overview of verifiers.

In the past, \textbf{global robustness} specifications have been proposed by~\citet{DBLP:conf/icml/LeinoWF21,DBLP:conf/cav/KatzBDJK17,DBLP:conf/ccs/Chen0QLJW21,10.1007/978-3-030-01090-4_1,DBLP:journals/pacmpl/KabahaD24} along with evaluation methods by~\cite{DBLP:conf/ijcai/RuanWSHKK19,DBLP:conf/nips/ZhangJH022}. However, while some works restrict the input space in their global robustness specification, none of these restrictions are based on the notion of probabilistic interpretability.
\paragraph{Conclusion}
We have presented VeriFlow, a flow-based density model that enables effective verification of neural networks within a learned proxy distribution and fits in existing verification infrastructure.

\begin{figure*}
    \centering
    \includegraphics[width=0.95\linewidth]{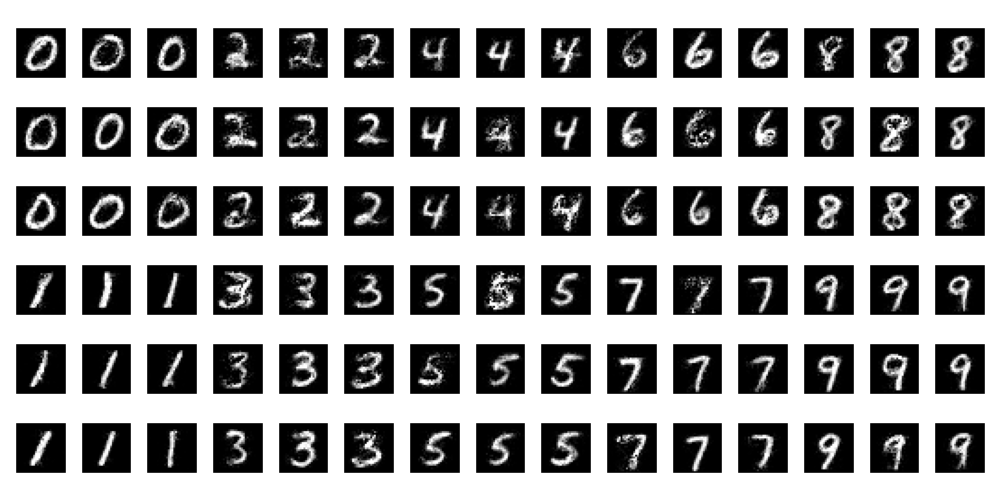}
    \caption{VeriFlow MNIST digit samples.}
    \label{fig:digi_samples_veriflow}
\end{figure*}

\begin{figure*}
    \centering
    \includegraphics[width=0.95\linewidth]{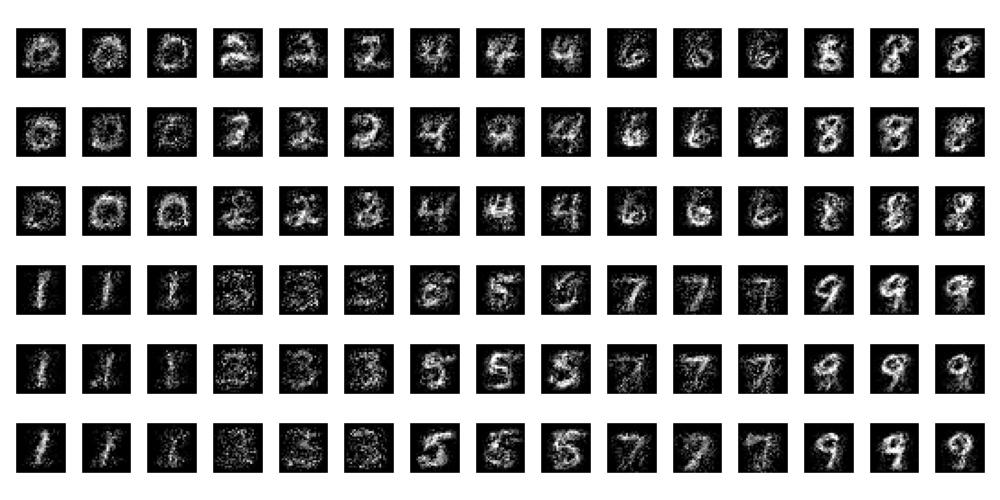}
    \caption{MACow MNIST digit samples.}
    \label{fig:digi_samples_macow}
\end{figure*}

% Fashion samples
\begin{figure*}
    \centering
    \includegraphics[width=0.95\linewidth]{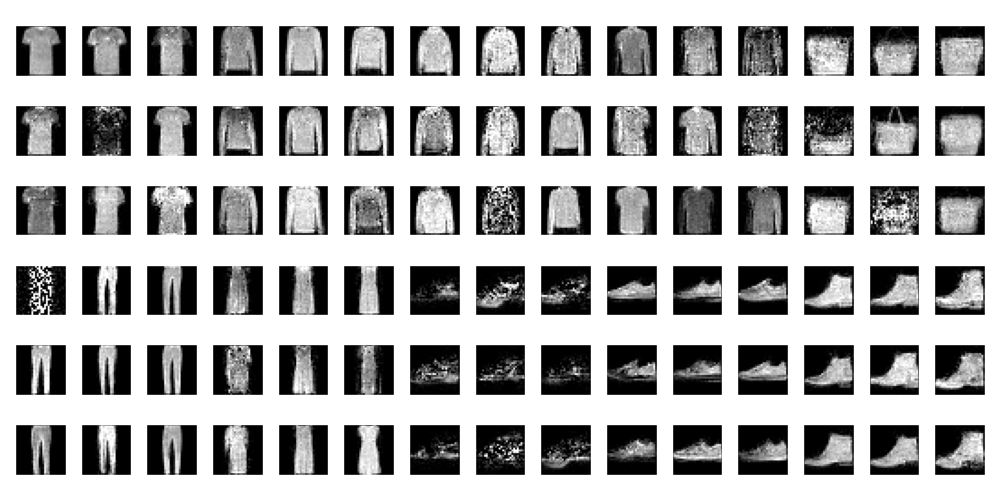}
    \caption{VeriFlow Samples from Fashion classes.}
    \label{fig:fashion_samples_veriflow}
\end{figure*}

\begin{figure*}
    \centering
    \includegraphics[width=0.95\linewidth]{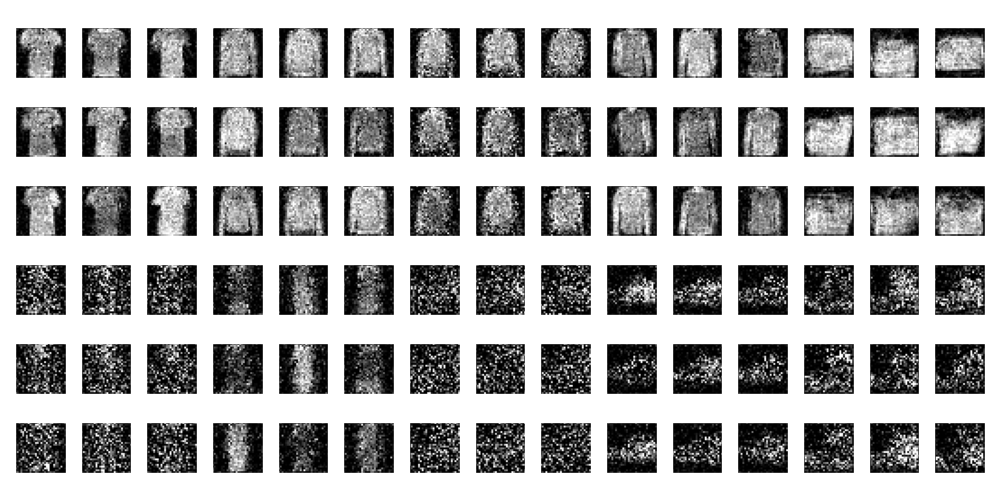}
    \caption{MACow Samples from Fashion classes.}
    \label{fig:fashion_samples_macow}
\end{figure*}

\begin{figure*}
    \centering
    \includegraphics[width=0.95\linewidth]{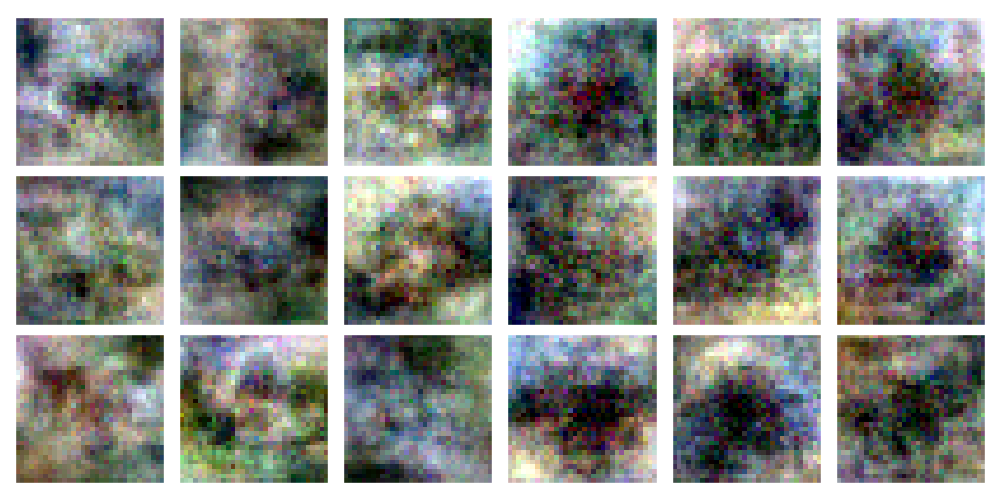}
    \caption{Samples from a VeriFlow model with radial base distribution and log-normal norm distribution (learnable distribution parameters) and from a baseline model (MACow) with standard normal base distribution (no learnable distribution parameters) trained on CIFAR10. The left most 3 columns are samples from the VeriFlow model, while the remaining 3 are samples from the baseline. Both models use 10 coupling blocks with 3 layers per coupling. We assume that a larger model is necessary to obtain better samples.}
    \label{fig:cifar_samples_macow}
\end{figure*}

\section*{Acknowledgements}
This work has been financially supported by Deutsche Forschungsgemeinschaft, DFG Project number 459419731, and the Research Center Trustworthy Data Science and Security (\mbox{\url{https://rc-trust.ai}}), one of the Research Alliance centers within the UA Ruhr (\mbox{\url{https://uaruhr.de}}).
The development of the VeriFlow library (now USFlows, \mbox{\url{https://github.com/aai-institute/USFlows}}) has been financially supported by the Bavarian AI Act Accelerator.

\end{document}